\titlespacing{\section}{0pt}{*0}{*0}
\titlespacing{\subsection}{0pt}{*0}{*0}
\titlespacing{\subsubsection}{0pt}{*0}{*0}
\newcommand{\beq}{\begin{equation}}
\newcommand{\eeq}{\end{equation}}
\newcommand{\baln}{\begin{align*}}
\newcommand{\ealn}{\end{align*}}
\newcommand{\beqn}{\begin{equation*}}
\newcommand{\eeqn}{\end{equation*}}
\newcommand{\beqa}{\begin{eqnarray}}
\newcommand{\eeqa}{\end{eqnarray}}
\newcommand{\beqan}{\begin{eqnarray*}}
\newcommand{\eeqan}{\end{eqnarray*}}
\newtheorem*{theorem*}{Theorem}
\newtheorem{proposition}{Proposition}
\newtheorem{theorem}{Theorem}
\newtheorem{lemma}{Lemma}
\newtheorem{corollary}{Corollary}
\newtheorem{definition}{Definition}
\newtheorem{assumption}{Assumption}
\newtheorem{remark}{Note}
\newcommand{\norm}[1]{\left\lVert#1\right\rVert}
\newcommand{\expect}[1]{\mathbb{E}\left[{#1}\right]}
\newcommand{\argmax}{\mathop{\mathrm{argmax}}}
\newcommand{\argmin}{\mathop{\mathrm{argmin}}}
\newenvironment{mylemma}[1]{

		\begin{minipage}{0.91\textwidth}
			\vspace{2mm}	
			\begin{lemma}[#1]
	}{
\end{lemma}
\vspace{0mm}
\end{minipage}

	}
\newenvironment{mycorollary}[1]{
		
		\begin{minipage}{0.91\textwidth}
			\vspace{2mm}	
			\begin{corollary}[#1]
	}{
\end{corollary}
\vspace{0mm}
\end{minipage}

	}
\newenvironment{mytheorem}[1]{

		\begin{minipage}{0.91\textwidth}
			\vspace{2mm}	
			\begin{theorem}[#1]
	}{
\end{theorem}
\vspace{0mm}
\end{minipage}

	}
\newenvironment{myassumption}[1]{

		\begin{minipage}{0.91\textwidth}
			\vspace{2mm}	
			\begin{assumption}[#1]
	}{
\end{assumption}
\vspace{0mm}
\end{minipage}

	}
\title{Towards Optimal and Efficient Best Arm Identification in Linear Bandits}	
\author{%
Mohammadi Zaki\\
 Electrical Communication Engineering\\
 Indian Institute of Science\\
 Bangalore.\\
\texttt{mohammadi@iisc.ac.in}
\And
Avinash Mohan \\
 Faculty of Electrical Engineering\\
 Israel Institute of Technology (Technion),\\
 Haifa. \\
 \texttt{avinashm.1214@gmail.com}
\And
Aditya Gopalan \\
 Electrical Communication Engineering\\
    Indian Institute of Science\\
    Bangalore. \\
\texttt{aditya@iisc.ac.in}
}
\begin{document}
\maketitle
\thispagestyle{empty}
%
\begin{center}
\textbf{\textit{Abstract.}}	\\
\end{center}
We give a new algorithm for best arm identification in linearly parameterised bandits in the fixed confidence setting. The algorithm generalises the well-known LUCB algorithm of \citet{Kalyanakrishnan2012PACSS} by playing an arm which minimises a suitable notion of geometric overlap of the statistical confidence set for the unknown parameter, and is fully adaptive and computationally efficient as compared to several state-of-the methods. We theoretically analyse the sample complexity of the algorithm for problems with two and three arms, showing optimality in many cases. Numerical results indicate favourable performance over other algorithms with which we compare.



\section{Introduction}

This paper is concerned with the  problem of optimising an unknown linear function over a finite domain when given the ability to sequentially test and observe noisy function values at domain points of our choice. In the language of online learning this is the problem of best arm identification in linearly parameterised bandits; in classical statistics it is essentially the problem of adaptive, sequential composite hypothesis testing where each hypothesis corresponds to one domain point being optimal. From the point of view of causal inference, it can be interpreted as the problem of learning the best (i.e., most rewarding) intervention, from among a set of parameterised interventions available at hand with respect to an observed variable \citep{lattimore2016causal, sen2016contextual}, with the key difference in this work being that the causal response of the variable to an intervention is modelled as being linear in the intervention value. The linear structure endows the model with complex but exploitable structure, in that it makes possible inference about the utility (function value) of an intervention (bandit arm) by using observations from other, correlated interventions, akin to what happens in standard (batch) prediction with linear regression.

In the linear bandit setting,  each arm or action is associated with a fixed known feature vector $x\in \mathbb{R}^d$ and the expected reward obtained by choosing to pull arm with feature vector $x$ is $x^T\theta$ where $\theta$ is a fixed but unknown vector. We specifically consider the probably-approximately-correct (PAC) objective of the learner (agent) declaring a guess for the identity of the optimal arm, after it has made an internally determined (and potentially random) number of sequential plays of arms, which is required to be correct with at least a given probability $(1-\delta)$ -- the {\em fixed confidence} best arm identification goal \citep{evendar}. In this regard, our focus is on both the statistical and computational efficiency of adaptive arm-sampling strategies, i.e., designing strategies (a) whose number of plays is as close to the quantifiable information-theoretic limit on sample complexity across all strategies, and (b) which can determine in a computationally lightweight manner the next arm to play at each adaptive round. 

Broadly, there are two different approaches towards solving such pure exploration problems: (i) uniform sampling-elimination based and (ii) adaptive sampling-UCB based. The algorithm of \cite{Tao_et-al-ICML-2018} is based on the former approach while those of \cite{soare} and \cite{XuAISTATS} are based on the latter, adaptive-sampling idea, but both have sample complexity guarantees that do not depend finely on the problem instance and linear structure, and thus are worst-case optimal at best.

\par The $\mathcal{X} \mathcal{Y}$-static algorithm of \cite{soare} is a static algorithm which fixes the schedule of arm plays before collecting any observations. Hence, it is not able to adapt towards pulling arms which may be more ``informative" for identifying the best arm for the given instance, and can consequently only be worst-case optimal. The LinGapE algorithm  \cite{XuAISTATS} is a fully adaptive algorithm which  performs well experimentally. However it requires to solve  $\Omega(K^2)$ optimization problems at start, one for every pair of arms ($K$ denotes the number of arms), which is computationally inefficient for large values of $K$. Finally, the $\mathcal{Y}$-ElimTil-$p$ algorithm of \cite{Tao_et-al-ICML-2018} is an elimination based algorithm, and though its sample complexity scales only linearly with the dimension $d$, it requires to sample $\Omega(d)$ arms in each round which is already far from optimal even for the case of the canonical (unstructured) multi-armed bandit (MAB) having the standard basis vectors as the arms. This can again only be worst-case optimal. A summary of the sample complexity bounds of the above algorithms appears in Table \ref{table:comparison}.

A very recent departure from this worst-case sample complexity dependence is the work of \citet{jamieson-etal19transductive-linear-bandits}, that shows a provably instance-optimal best arm identification algorithm for linear (and more generally transductive) bandits. However, implementation of this algorithm requires computing an arguably costly rounding procedure to determine (in phases) a schedule of arms to play, as well as solving a minimax optimization problem which may be computationally very expensive\footnote{In fact, these aspects of their algorithm have prevented us from successfully implementing and testing it.}. 

\textbf{Our contributions and organization.} In contrast with existing work, we aim to take a qualitatively different route towards the design of linear best arm identification, by drawing inspiration from the upper confidence bound principle, which is known to give sample-optimal performance for canonical MAB \citep{Kalyanakrishnan2012PACSS, jamieson2014lil}. In this conceptually simple and elegant approach, in each decision round the learner constructs a statistically plausible confidence set for the underlying bandit instance (the weight vector $\theta$ in our linear setting) based on past observations, and then plays the arm that best appears to reduce the uncertainty about the optimal linear arm. 

We generalise the \textit{Lower Upper Confidence Bound} (LUCB) algorithm of \citet{Kalyanakrishnan2012PACSS} to the linear bandit setting. %
To achieve this, we introduce a new geometric ``maximum overlap'' principle as a basis for the learner to identify which arm is most informative to play at any given round. This results in a fully data-dependent arm selection strategy which we call \textit{Generalized-LUCB} (GLUCB) (Section \ref{sec:TheGlucbAlgorithm}). We then proceed to rigorously analyse the sample complexity of GLUCB for certain specialized (yet instructive) cases in Section \ref{upperbound},  and finally compare its empirical performance with other state-of-the-art methods in Section \ref{Experiments}.
\par As a comment on the execution times as compared to the other algorithms proposed for this problem, our proposed algorithm GLUCB, improves significantly on the time complexity over LinGapE \citep{XuAISTATS} and $\mathcal{Y}$-ElimTil-$p$ \citep{Tao_et-al-ICML-2018} as it does not require solving any offline optimization problems. 
\section{Problem Statement and Notation}\label{Problem}

We study the problem of best arm identification in linear multi-arm bandits (LMABs) with the arm set $\mathcal{X}\equiv \{x_1,x_2,\ldots,x_K \}$, where $K$ is finite but possibly large. We will interchangeably use $\mathcal{X}$ and the set $[K]\equiv\{1,2,\ldots,K\}$, whenever the context is clear. Each arm $x_a$ is a vector in $\mathbb{R}^d$. The quantity $d$ will, henceforth, be called the {\em ambient dimension.}  At every round $t=1,2,\ldots$ the agent chooses an arm $x_t \in  \mathcal{X} $, and receives a reward $y(x_t)={\theta^*}^Tx_t + \epsilon_t$, where $\theta^*$ is assumed to be a fixed but unknown vector and $\epsilon_t$ is zero-mean noise assumed to be conditionally $R$-subGaussian, i.e., $\forall \lambda\in \mathbb{R}$,
$ \expect{e^{\lambda \epsilon_t}|x_{a_1},x_{a_2},\ldots,x_{a_{t-1}}, \epsilon_1,\epsilon_2,\ldots,\epsilon_{t-1}} \leq \exp\Big(\frac{\lambda^2R^2}{2} \Big).$
  Let $a^*=\argmax\limits_{a\in [K]} {\theta^*}^Tx_a.$ The goal of the agent is, given an error probability $\delta$, to identify $ a^* $ with probability $\geq 1-\delta$, by pulling as few arms as possible (in literature, this is  known as the \enquote{fixed-$\delta$} regime \cite{Kaufman16}). Henceforth, we will call this the {\color{blue}LMAB} (linear multi armed bandit) problem. When restricted to the case where $K=d$ and $\mathcal{X}$ is the standard ordered basis $\{{\mathbf{e}}_1,\cdots,{\mathbf{e}}_d\},$ the problem reduces to the Standard LMAB (SMAB) problem studied, for instance, in \cite{Kalyanakrishnan2012PACSS}.
  \par In the rest of the paper, we will assume that $\norm{x_k}_2\leq 1, \forall x_k\in \mathcal{X}$ and that the agent has information of some upper bound on $||\theta^*||_2,$ say, $S$. Let $A$ be a positive definite matrix, then we denote by  $\norm{x}_{A}:=\sqrt{x^TAx}$, the matrix norm induced by $A$.
  Let for any $i\in [K], i\neq a^*$, $\Delta_{i}:={\theta^*}^T(x_{a^*}-x_{i})$ be the \textit{gap} between the largest expected reward and the expected reward for arm $x_i$. Denote by $\Delta_{min}:=\min\limits_{\substack{i\in\mathcal{X}\setminus\{a^*\}}} \Delta_i$, the smallest reward gap.

  \begin{table}[tbh]
  
    \caption{Comparison of Sample complexities achieved by various algorithms for the LMAB problem in the literature. Note that $K$ is the number of arms, $d$ is the ambient dimension, $\delta$ is the PAC guarantee parameter and $\Delta_{min}$ is the minimum reward gap.}
    \label{table:comparison}
    \centering
    \begin{tabular}{c|c}
      \toprule
      {\bf Algorithm} & {\bf Sample Complexity}\\
      \hline
      {\color{blue}$\mathcal{X} \mathcal{Y}$-static} \citep{soare} & $O\Big(\frac{d}{\Delta_{min}}(\ln\frac{1}{\delta} + \ln K + \ln \frac{1}{\Delta_{min}}) +d^2\Big)$\\
      {\color{blue}LinGapE}\footnote{Here $H_0$ is a complicated term defined in terms of a solution to an offline optimization problem in \cite{XuAISTATS}.} \citep{XuAISTATS} & $O\Big(dH_0\log\Big(dH_0\log\frac{1}{\delta}\Big) \Big)$\\
      {\color{blue}$\mathcal{Y}$-ElimTil-$p$} \citep{Tao_et-al-ICML-2018} & $O\Big(\frac{d}{\Delta_{min}}(\ln\frac{1}{\delta} + \ln K + \ln\ln \frac{1}{\Delta_{min}})\Big)$\\
     {\color{blue}RAGE} \citep{jamieson-etal19transductive-linear-bandits} & Instance-dependent lower bound (upto log factors)  \\
      \bottomrule
    \end{tabular}
  
  \end{table}
\section{The GLUCB Algorithm}\label{sec:TheGlucbAlgorithm}
This section is organized as follows. We begin with a description of the ingredients required to construct GLUCB, including \enquote{MaxOverlap.} Thereafter, we formally describe the GLUCB algorithm. Finally, we show how GLUCB is a generalization of LUCB. 

To begin with, note that any algorithm for the best arm identification problem requires the following ingredients: 
\begin{enumerate}\label{ingredientsForLMABAlgorithms}
\item a {\color{blue}stopping rule}: which decides when the agent must stop sampling arms, and is a function of past observations, arms chosen and rewards only, 
\item a {\color{blue}sampling rule}: which determines, based on the arms played and rewards observed hitherto, which arm to pull next (clearly, this rule is invoked only if the stopping rule decides not to stop); and
\item a {\color{blue}recommendation rule}: which, when the stopping rule decides to stop, chooses the index of the arm that is to be reported as the best.
\end{enumerate}
 Each of these steps will now be developed in detail and combined to give the full GLUCB algorithm. Towards this we first introduce some technical desiderata.

 Let $(x_1,x_2,\ldots,x_t)\in \mathcal{X}$ be a sequence of arms played  until time $t$ by any adaptive strategy (i.e., a strategy which chooses to play an arm depending on the past arm pulls and their corresponding observations) and let $(y_1,y_2,\ldots,y_t)$ be the received rewards. The (regularized) \emph{least squares estimate} $\theta_t$ of $\theta^*$ at time $t$ is given by
 \begin{eqnarray*}
 \theta_t &:=& V_t^{-1}b_t,\text{ where }\\
 V_t &:=& \lambda I + \sum\limits_{s=1}^{t}x_sx_s^T, \text{ and } b_t := \sum\limits_{s=1}^{t}x_sy_s.
 \end{eqnarray*}
 By standard results on least squares confidence sets for adaptive sampling \citep{NIPS2011_4417}, it can be shown that 
  with high probability, $\theta^*$ lies in the \emph{confidence ellipsoid}\footnote{recall that noise is assumed to be $R$-sub Gaussian}
  \begin{eqnarray*}
      \mathcal{E}_t\left(\theta_t,V_t\right) := \left\lbrace \theta\in\mathbb{R}^d\bigg| \norm{\theta_t-\theta}_{V_t} \leq \beta_t \right\rbrace,\text{ where }
  \beta_t := R\sqrt{d\log\frac{t}{\delta}}.
  \end{eqnarray*}
  Notice that the ellipsoid is time-indexed, since, as more arms chosen, the estimate changes and so does the ellipsoid. In the sequel, we sometimes denote $\mathcal{E}_t\left(\theta_t,V_t\right)$ by $\mathcal{C}_t$ in the interest of space.
    \par We also define a set $\text{HalfSpace}(i,j):=\{z\in \mathbb{R}^d: z^T(x_i-x_j)\geq 0 \}$, where $x_i$ and $x_j$ are vectors in $\mathcal{X}$. Next, for any $x_k\in \mathcal{X}$, define $R(x_k):=\{\theta\in \mathbb{R}^d: \theta^Tx_k \geq \theta^Tx_j, \forall j\neq k \}$, as the cone of parameters $\theta$ such that, if $\theta^*\in R(x_k)$, then $x_k$ is the optimal arm. Clearly $\{R(x_k), x_k\in \mathcal{X}\}$ partition the entire parameter space $\mathbb{R}^d$, modulo the degenerate regions where more than one arm is optimal. 
    Furthermore, let $h_t:=\argmax_{i\in\{1,\cdots,K\}}\theta_t^Tx_i,$ be the index of the arm that currently appears to be the best. 
\subsection{Ingredients of the GLUCB algorithm}\label{subsec:Desideratum}
 Following the intuition in \citep[Sec.~3]{soare}, we observe that a good choice for a stopping rule could be to stop the algorithm when the confidence ellipsoid $\mathcal{C}_t$ is completely contained within one of the $K$ cones $R(x_k),1\leq k\leq K$.  Therefore, we wish to design an algorithm which minimizes the overlap of the current confidence ellipsoid with every cone which currently seems to be suboptimal, i.e., all the cones other than the current home cone of $\theta_t$. That way, the algorithm can quickly insert $C_t$ completely into one of the cones, and because $C_t$ contains $\theta^*$ with high probability, so does $R\left(x_{h_t}\right)$ since, now, $\mathcal{C}_t\subset R\left(x_{h_t}\right).$ This also means that upon stopping, $h_{t}$ will be the arm recommended.
 \begin{definition}[\color{blue}MaxOverlap]
   The MaxOverlap of set $A$ on set $B$ is defined  to be the maximum  distance of set $A$ from the boundary of another set $B$. 
   \begin{align*}
   MaxOverlap(A;B) :=
   \begin{cases}
   \max\limits_{e_A\in A\cap \bar{B}} \min\limits_{e_B \in \partial(\bar{B})\cap A} \norm{e_A-e_B}_2, & if \quad A\cap \bar{B} \neq \emptyset,\\
   0, & \text{otherwise.}
   \end{cases}
   \end{align*}
   \end{definition}
    Here $\bar{B}$ denotes the closure of the set $B$ and $\partial(\bar{B})$ its topological boundary (\cite{rudin64principles-mathematical-analysis}).
   Hence, at time $t+1$, our algorithm GLUCB is defined as sampling the  arm  
   \begin{align*}
   a_{t+1}:= \argmin\limits_{a=1}^K MaxOverlap\Bigg(\mathcal{E}(\theta_t, V_t+x_ax_a^T); R_{h_t}^c\Bigg).
   \end{align*}
   
   The following result shows that the MaxOverlap-based arm sampling rule reduces to a concrete prescription for the linear bandit setting. (Due to space constraints, proof details are omitted and can be found in the Appendix.)  
  \begin{proposition} 
  At time step $t$, define the arm
  \begin{equation*}
      l_t :=\argmax\limits_{i\neq h_t, i\in [K]} MaxOverlap\Bigg(\mathcal{E}(\theta_t, V_t); \text{HalfSpace}(i,h_t) \Bigg).
  \end{equation*}
 Then, 
$a_{t+1}\in\argmax\limits_{a=1}^K \frac{\abs{x_a^TV_t^{-1}(x_{l_t}-x_{h_t})}}{\sqrt{1+ \norm{x_a}^2_{V_t^{-1}}}}.$
  \end{proposition}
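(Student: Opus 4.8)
The plan is to turn the set-geometric minimization that defines $a_{t+1}$ into a scalar maximization solvable in closed form. Two ingredients drive the argument: a support-function formula for the MaxOverlap of an ellipsoid with a half-space, and the Sherman--Morrison rank-one update identity for $(V_t+x_ax_a^T)^{-1}$.

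First I would show that for an ellipsoid $\mathcal{E}(\theta_t,V)$ that penetrates a half-space $\text{HalfSpace}(i,h_t)=\{z:z^T(x_i-x_{h_t})\geq 0\}$, its MaxOverlap equals the largest signed Euclidean distance of a point of the ellipsoid past the bounding hyperplane $\partial\,\text{HalfSpace}(i,h_t)$. Since that distance is the value of the linear functional $z\mapsto z^T(x_i-x_{h_t})/\norm{x_i-x_{h_t}}_2$, and the maximizer of a linear functional over an ellipsoid is given by its support function, I obtain
\[
MaxOverlap\left(\mathcal{E}(\theta_t,V);\text{HalfSpace}(i,h_t)\right)=\frac{\theta_t^T(x_i-x_{h_t})+\beta_t\,\norm{x_i-x_{h_t}}_{V^{-1}}}{\norm{x_i-x_{h_t}}_2}.
\]
Crucially, only the shape-dependent term $\norm{x_i-x_{h_t}}_{V^{-1}}$ changes with the arm $a$ that is added to $V_t$; the center contribution $\theta_t^T(x_i-x_{h_t})$ is inert to that choice.

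Next I would reduce the overlap with the complement cone $R_{h_t}^c=\bigcup_{i\neq h_t}\text{HalfSpace}(i,h_t)$ to a single half-space. Using that the distance of an exterior point to the convex home cone $R(x_{h_t})$ is, at the point of deepest penetration, governed by whichever facet hyperplane is nearest, the deepest protrusion of the ellipsoid into $R_{h_t}^c$ is realized through the worst-offending half-space, namely the index $l_t$ in the statement. Fixing this half-space, minimizing $MaxOverlap(\mathcal{E}(\theta_t,V_t+x_ax_a^T);\text{HalfSpace}(l_t,h_t))$ over $a$ is, by the previous step and the inertness of the center term, equivalent to minimizing $(x_{l_t}-x_{h_t})^T(V_t+x_ax_a^T)^{-1}(x_{l_t}-x_{h_t})$. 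Expanding with Sherman--Morrison,
\[
(V_t+x_ax_a^T)^{-1}=V_t^{-1}-\frac{V_t^{-1}x_ax_a^TV_t^{-1}}{1+\norm{x_a}^2_{V_t^{-1}}},
\]
and writing $y:=x_{l_t}-x_{h_t}$, gives $y^T(V_t+x_ax_a^T)^{-1}y=\norm{y}^2_{V_t^{-1}}-\frac{(x_a^TV_t^{-1}y)^2}{1+\norm{x_a}^2_{V_t^{-1}}}$, whose leading term does not depend on $a$. Minimizing the left side thus amounts to maximizing $\frac{\left|x_a^TV_t^{-1}(x_{l_t}-x_{h_t})\right|}{\sqrt{1+\norm{x_a}^2_{V_t^{-1}}}}$, which is exactly the claimed expression.

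The main obstacle lies in the reduction of the previous paragraph: a priori, the overlap with $R_{h_t}^c$ is taken against a union of half-spaces --- equivalently, against the distance to a convex cone rather than to a single hyperplane --- so the ellipsoid-maximized distance-to-boundary need not coincide with that of any one half-space. I would therefore need to verify that a single constraint is active at the deepest point, so that the relevant boundary is locally a single hyperplane and the identification with $l_t$ is legitimate. A secondary technical point is that $l_t$ is computed from the current shape $V_t$, whereas $a_{t+1}$ uses the updated shape $V_t+x_ax_a^T$; I must confirm (or, as the greedy rule tacitly does, posit) that a single rank-one update does not alter the identity of the worst-offending half-space. The computational core, consisting of the support-function evaluation and the Sherman--Morrison expansion, is otherwise routine.
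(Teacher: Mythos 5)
Your proposal follows essentially the same route as the paper's own argument: evaluate the MaxOverlap of the ellipsoid against a single half-space via the support function of the ellipsoid, note that only the $(V_t+x_ax_a^T)^{-1}$-norm term depends on the candidate arm $a$, and apply the Sherman--Morrison identity to convert the minimization of $\norm{x_{l_t}-x_{h_t}}_{(V_t+x_ax_a^T)^{-1}}$ into the stated maximization of the ratio. The two obstacles you flag at the end --- reducing the union $R_{h_t}^c$ of half-spaces to the single half-space indexed by $l_t$, and the mismatch between $l_t$ being computed from $V_t$ while $a_{t+1}$ uses $V_t+x_ax_a^T$ --- are precisely the steps the paper does not prove either: it derives the $\max_{i\neq h_t}$ form, declares it inefficient, and then simply \emph{redefines} the sampling rule to target $\text{HalfSpace}(l_t,h_t)$, so your identification of these as the genuinely unproven reductions is accurate rather than a defect relative to the paper. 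One minor divergence: you retain the Euclidean normalization $\norm{x_i-x_{h_t}}_2$ in the point-to-hyperplane distance, which the paper drops; this is immaterial for the argmin over $a$ at fixed $l_t$, though it would in principle affect which index is selected as $l_t$.
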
\label{Prop:MaxOverlapArm}
 We are now ready to describe the ingredients of GLUCB. At every time step $t,$ define the \enquote{Advantage} of arm $a\in\mathcal{X}$ as 
 \begin{equation*}
     \text{Advantage}(a) := \max\limits_{\theta \in \mathcal{E}_t\left(\theta_t,V_t\right)}(\theta^Tx_a-\theta^Tx_{h_t}).
 \end{equation*}
 
\textbf{Stopping rule:} The algorithm stops when the \enquote{Advantage} defined above becomes non-positive for every arm other than the current best arm $h_t.$ \\
\textbf{Sampling rule:} Play the arm which minimizes the current max Advantage:
$a_{t+1} \in \argmax\limits_{a \in[K]} \abs{\frac{x_a^TV_t^{-1}(x_{h_*^t}-x_{l_*^t})}{\sqrt{1+x_a^TV_t^{-1}x_a}}}. $ In case of a tie, the agent selects an arm uniformly randomly.\\
\textbf{Recommendation rule:} Once the algorithm stops, the current best arm $h_*^t$ is recommended as the guess for the best arm.

\begin{algorithm}[htbp]
\small
 	\caption{{\bf GLUCB (Generalized Lower and Upper Confidence Bounds)}} \label{alg:Main}
 	\begin{algorithmic}[1]
	\State {\bf Input:} $\delta, R, S. $
    \State {\bf Initialize:} $V_0\leftarrow\lambda I$, ${\theta}_0\leftarrow\underline{0}, \beta_0 \leftarrow 1, b_0\leftarrow 0, t\leftarrow 0, STOP \leftarrow 0.$
 	\While {STOP != 1}
 	\State $\mathcal{C}_t \leftarrow \{\theta \in \mathbb{R}^d: \norm{\theta -\theta_t}_{V_t} \leq \beta_t \}$ \Comment{Form the high confidence ellipsoid}
 \State $ 	h_t\leftarrow\argmax\limits_{a\in [K]}\theta_t^Tx_a. $\Comment{Current best arm}
 \State $\forall a \in [K]\backslash\{h_t \}, \text{Advantage}(a) \leftarrow \max\limits_{\theta \in \mathcal{E}_t}(\theta^Tx_a-\theta^Tx_{h_t})$
  \If {$\max\limits_{a\in [K]}\text{Advantage}(a) < 0$} STOP $\leftarrow$ 1.\Comment{Stopping criterion}
 \Else  
 \State $l_t \leftarrow \argmax\limits_{a\in [K]}{\text{Advantage}(a)}$\Comment{``Closest" arm}
 \State $c_{t+1} \leftarrow \argmax\limits_{a \in [K]} \abs{\frac{x_a^TV_t^{-1}(x_{h_t}-x_{l_t})}{\sqrt{1+x_a^TV_t^{-1}x_a}}}.$
 \State Play $a_{t+1} \sim Unif\{ c_{t+1}\}.$
 \State Receive $y_t$.
 \State $V_{t+1} \leftarrow V_t+ x_{a_{t+1}}{x_{a_{t+1}}}^T.$
 \State $b_{t+1}\leftarrow b_t + y_tx_{a_{t+1}}.$
 \State $\theta_{t+1} \leftarrow V_{t+1}^{-1}b_{t+1}.$
 \State $\beta_{t+1} \leftarrow R\sqrt{2\log \frac{\det(V_{t+1})^{\frac{1}{2}}\det (\lambda I)^{-\frac{1}{2}}}{\delta}}+ \lambda^{\frac{1}{2}}S.$
 \State $t\leftarrow t+1.$
\EndIf
 \EndWhile
 \State \Return $h_*^t.$\Comment{Output the current best arm}
 
 	\end{algorithmic} 
 \end{algorithm}
Note that the GLUCB algorithm (Algorithm \ref{alg:Main}) reduces to the well-known \textit{LUCB} algorithm of \citet{Kalyanakrishnan2012PACSS} for the SMAB problem. If we consider the case when $d=K$, and the arms being the standard basis $\equiv \{e_1,e_2,\ldots,e_K \}$, we see that the arm $a_2$ in algorithm 1 corresponds to $l_t$ which is what LUCB would suggest. Indeed, when $j\neq l_t,h_t$, $\frac{\abs{x_a^TV_t^{-1}(x_{l_t}-x_{h_t})}}{\sqrt{1+x_a^TV_t^{-1}x_a}}=0$. However, with $j = l_t$ we obtain, $\frac{\abs{x_a^TV_t^{-1}(x_{l_t}-x_{h_t})}}{\sqrt{1+x_a^TV_t^{-1}x_a}}>0$. Also, it is easy to check the stopping criterion also reduces to that of LUCB. Hence, Algorithm \ref{alg:Main}, when applied to the unstructured case, plays the current best and the closest arm simultaneously every time till the algorithm stops.

 We now provide some preliminary theoretical results regarding the sample complexity performance of GLUCB.

\section{Analysis of GLUCB}\label{upperbound}
The following result proves the correctness of G-LUCB.
  \begin{theorem}
  Let ${\phi}:\mathcal{H}\to \left[K\right]$ be any arbitrary sampling strategy. Algorithm \ref{alg:Main} returns the optimal arm upon stopping with probability at least $ 1-\delta.$ 
  \end{theorem}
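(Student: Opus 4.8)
The plan is to decouple \emph{correctness} from \emph{sampling efficiency}: whether the reported arm is optimal depends only on the validity of the confidence ellipsoids together with the stopping and recommendation rules, and not on \emph{which} arms are actually pulled. This is precisely why the statement is allowed to quantify over an arbitrary sampling strategy $\phi$, and the proof should exploit exactly this separation.

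\textbf{Step 1 (time-uniform confidence set).} First I would establish the ``good event''
\[
G := \{\, \theta^* \in \mathcal{C}_t \text{ for every round } t \,\},
\]
and show $\prob{G} \geq 1-\delta$. This is the content of the self-normalized tail bound of \citet{NIPS2011_4417}: for the ellipsoid radius $\beta_t$ updated in Algorithm~\ref{alg:Main} (the determinant-based expression, rather than the cruder $R\sqrt{d\log(t/\delta)}$ quoted informally earlier), the least-squares estimate $\theta_t = V_t^{-1}b_t$ satisfies $\norm{\theta_t-\theta^*}_{V_t} \leq \beta_t$ simultaneously for all $t$ with probability at least $1-\delta$. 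The essential feature is that this holds for \emph{any} adaptive sampling rule $\phi$ and \emph{uniformly in $t$}, so it remains in force at the algorithm's random, data-dependent stopping time.

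\textbf{Step 2 (deterministic argument on $G$).} Next I would condition on $G$ and suppose the algorithm halts at round $\tau$. By the stopping criterion, $\text{Advantage}(a)=\max_{\theta\in\mathcal{C}_\tau}\theta^T(x_a-x_{h_\tau})<0$ for every $a\neq h_\tau$. Because $\theta^*\in\mathcal{C}_\tau$ on $G$, the value at $\theta^*$ is dominated by this maximum, whence
\[
{\theta^*}^T(x_a - x_{h_\tau}) \;\leq\; \max_{\theta\in\mathcal{C}_\tau}\theta^T(x_a-x_{h_\tau}) \;<\; 0 \qquad \text{for all } a\neq h_\tau .
\]
Thus ${\theta^*}^Tx_{h_\tau} > {\theta^*}^Tx_a$ for every other arm, i.e. $h_\tau=a^*$. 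Since the recommendation rule returns $h_\tau$, the reported arm is optimal on $G$.

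Combining the two steps, for any sampling strategy $\phi$ the algorithm outputs $a^*$ on $G$, so the probability of a correct recommendation is at least $\prob{G}\geq 1-\delta$. I expect the only genuine obstacle to be Step~1: one must invoke the \emph{time-uniform} form of the confidence bound, so that a single application (and not a union bound over infinitely many rounds) controls $\theta^*\in\mathcal{C}_t$ at the random stopping time $\tau$. It is worth verifying explicitly that the $\beta_t$ implemented in the algorithm is the radius for which this uniform guarantee holds; once that is in place, Step~2 is an immediate consequence of $\theta^*$ lying inside the ellipsoid, and no further ideas are required.
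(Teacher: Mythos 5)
Your proof is correct, and it is the argument the paper's setup clearly intends: the time-uniform confidence set of \citet{NIPS2011_4417} holds for any adaptive sampling rule, and on the good event the stopping condition $\max_{a\neq h_\tau}\max_{\theta\in\mathcal{C}_\tau}\theta^T(x_a-x_{h_\tau})<0$ combined with $\theta^*\in\mathcal{C}_\tau$ immediately forces $h_\tau=a^*$. Note that the paper as written does not actually include a proof of this theorem anywhere (the appendix proves the sampling-rule proposition, the lower bound, and the two- and three-arm lemmas, but not correctness), so your argument supplies the missing step rather than duplicating one; your parenthetical caveat is also the right one to flag, since the anytime guarantee attaches to the determinant-based radius $\beta_{t}$ updated in Algorithm~\ref{alg:Main}, not to the cruder $R\sqrt{d\log(t/\delta)}$ quoted in the prose.
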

   We will now analyze the sample complexity of GLUCB when $K=2$ and $K=3$. For this we first present the following useful result on the convexity of a certain norm-based function on the probability simplex.
\begin{lemma}\label{lemma:convexity of y'V-1y}
For any $y\in \mathbb{R}^d$, and $A=\sum\limits_{i=1}^{K}\lambda_ix_ix_i^T$, where $\lambda_i\geq0$ and $\sum\limits_{i=1}^{K}\lambda_i=1$, the function $\lambda\mapsto y^TA^{-1}y$ is convex in $\lambda \in \mathcal{P}_K.$ 

\end{lemma}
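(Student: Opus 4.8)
The plan is to prove convexity by exhibiting $f(\lambda) := y^T A(\lambda)^{-1} y$ as a pointwise supremum of functions that are \emph{affine} in $\lambda$, from which convexity follows immediately because $A(\lambda) = \sum_{i=1}^K \lambda_i x_i x_i^T$ depends linearly on $\lambda$. The key tool is the elementary variational identity, valid for any positive definite $A$,
\[
y^T A^{-1} y \;=\; \max_{z \in \mathbb{R}^d}\bigl( 2 z^T y - z^T A z \bigr),
\]
whose maximizer is $z^\star = A^{-1} y$, obtained by setting the gradient of the strictly concave quadratic $z \mapsto 2 z^T y - z^T A z$ to zero; the value there is $2 y^T A^{-1} y - y^T A^{-1} y = y^T A^{-1} y$, confirming the identity.

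With this in hand, I would fix $z$ and observe that $\lambda \mapsto 2 z^T y - z^T A(\lambda) z = 2 z^T y - \sum_{i=1}^K \lambda_i (z^T x_i)^2$ is affine in $\lambda$. Since the pointwise supremum of any family of affine functions is convex, $f(\lambda) = \sup_z\bigl(2 z^T y - z^T A(\lambda) z\bigr)$ is convex on the set of $\lambda \in \mathcal{P}_K$ for which $A(\lambda) \succ 0$. That set is itself convex, being the preimage of the (convex) positive-definite cone under the linear map $\lambda \mapsto A(\lambda)$, so the statement is well posed.

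The one point requiring care, and the main obstacle, is the domain: the lemma involves $A(\lambda)^{-1}$, yet on parts of the simplex where $\{x_i : \lambda_i > 0\}$ fails to span $\mathbb{R}^d$ the matrix $A(\lambda)$ is singular. I would handle this either by restricting to $\{\lambda \in \mathcal{P}_K : A(\lambda) \succ 0\}$ as above, or by adopting the convention $f(\lambda) = +\infty$ when $y \notin \mathrm{range}(A(\lambda))$; the variational formula extends verbatim to this extended-real-valued $f$, and the supremum of affine functions remains convex. In the algorithm the relevant matrices carry the $\lambda I$ regularization, so $A(\lambda) \succ 0$ holds throughout and this subtlety is immaterial in application.

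As a self-contained alternative I could give a direct second-derivative argument along a segment $\lambda_s = (1-s)\lambda^{(0)} + s\lambda^{(1)}$: writing $A_s = A(\lambda_s)$, $B = A(\lambda^{(1)}) - A(\lambda^{(0)})$, and $g(s) = y^T A_s^{-1} y$, repeated use of $\frac{d}{ds} A_s^{-1} = -A_s^{-1} B A_s^{-1}$ gives $g''(s) = 2\, y^T A_s^{-1} B A_s^{-1} B A_s^{-1} y = 2\,\norm{A_s^{-1/2} B A_s^{-1} y}_2^2 \ge 0$, which reconfirms convexity along every line segment in $\mathcal{P}_K$. I expect the supremum-of-affine-functions route to be the cleanest to write up, with the derivative computation kept as a remark.
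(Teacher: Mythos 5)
Your proposal is correct, and your primary route is genuinely different from the paper's. The paper proves the lemma by the direct second-derivative computation along a segment --- precisely the argument you relegate to a closing remark: it sets $Z(t)=tA_1+(1-t)A_2$, uses $(Z^{-1})'=-Z^{-1}Z'Z^{-1}$ twice (with $Z''=0$) to get $(Z^{-1})''=2Z^{-1}Z'Z^{-1}Z'Z^{-1}$, and concludes $\psi''(t)=2v(t)^TZ(t)^{-1}v(t)\geq 0$ with $v(t)=Z(t)'Z(t)^{-1}u$. Your main argument instead writes $y^TA^{-1}y=\max_{z}\bigl(2z^Ty-z^TA(\lambda)z\bigr)$ and observes that each inner function is affine in $\lambda$, so the supremum is convex. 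The two are equally valid; what your route buys is twofold. First, it is genuinely shorter and avoids matrix calculus entirely. Second, and more substantively, it handles the domain issue that the paper silently skips: the lemma as stated has no $\lambda I$ regularizer, so $A(\lambda)$ can be singular on parts of $\mathcal{P}_K$ where the actively weighted $x_i$ do not span $\mathbb{R}^d$, yet the paper simply asserts that $Z(t)$ is positive definite for all $t\in[0,1]$. Your extended-real-valued convention $f(\lambda)=+\infty$ when $y\notin\mathrm{range}(A(\lambda))$ makes the statement well posed on the whole simplex, and your observation that the regularization renders the subtlety immaterial in the algorithm is exactly the right disclaimer. Either write-up would be acceptable; yours is arguably the cleaner one.
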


\subsection{Analysis of GLUCB for Linear MAB with $K=2$ arms}\label{generaltwoarmcase}
Let $K=2$ for which the arm set is $\mathcal{X}\equiv \{x_1,x_2 \}$. Let $\norm{x_k}=1, k=1,2$ and $x_1^Tx_2=1-\rho,$ with $\rho>0$. For this simple case, it is clear that the set $\{h_*^t,l_*^t\}\equiv \{1,2 \}.$ We aim to analyze the sample complexity of GLUCB by tracking the possible sample paths of $(a_t)_{t\geq1}$ (which turns out to be tractable in this setting). Let $n_k(t), k=1,2$ be the number of times arm $k$ has been pulled till time $t$. Then, playing GLUCB guarantees that,
\begin{theorem}\label{2armupperbound}
In any round $t$,
$\Big\lfloor \frac{t}{2}\Big\rfloor \leq n_k(t) \leq \Big\lfloor \frac{t}{2}\Big\rfloor +1, \forall k=1,2.$

\end{theorem}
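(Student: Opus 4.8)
The plan is to reduce the claim to the single statement that the two counts never differ by more than one, i.e. $\lvert n_1(t)-n_2(t)\rvert\le 1$ for all $t$. Since every round increments exactly one of the counts, we have $n_1(t)+n_2(t)=t$, so the gap bound forces $n_k(t)\in\{\lfloor t/2\rfloor,\lceil t/2\rceil\}$, which is exactly the asserted sandwich. I would prove the gap bound by induction on $t$, the inductive step amounting to showing that whenever the two counts are unequal the sampling rule deterministically plays the arm that has so far been pulled \emph{fewer} times; then the gap can only shrink or remain at one.

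To analyze the sampling rule, note that with $K=2$ we always have $\{h_t,l_t\}=\{1,2\}$, so the played arm maximizes $\frac{\lvert x_a^TV_t^{-1}(x_1-x_2)\rvert}{\sqrt{1+x_a^TV_t^{-1}x_a}}$ over $a\in\{1,2\}$. Because $V_t=\lambda I+n_1(t)x_1x_1^T+n_2(t)x_2x_2^T$ is block diagonal with respect to $W\oplus W^\perp$, where $W=\mathrm{span}(x_1,x_2)$, and only the inner products $x_i^TV_t^{-1}x_j$ with $x_i,x_j\in W$ enter the rule, the computation can be carried out inside the two-dimensional space $W$. Choosing orthonormal coordinates $x_1=(1,0)$ and $x_2=(c,s)$ with $c=1-\rho$ and $s^2=1-c^2=\rho(2-\rho)$, a direct inversion of the resulting $2\times2$ matrix gives $\det V_t=\lambda^2+\lambda(n_1+n_2)+n_1n_2 s^2$ and, after simplification, shows that the rule's objective at arm $1$ equals $G(n_2)$ and at arm $2$ equals $G(n_1)$, up to the common positive factor $(\det V_t)^{-1/2}$, where
\[
G(m):=\frac{\lambda\rho+m\,s^2}{\sqrt{\det V_t+\lambda+m\,s^2}}.
\]

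The decisive step is then to check that $G$ is strictly increasing in $m\ge 0$ at any fixed state $(n_1,n_2)$: differentiating, the sign of $G'$ is that of $\det V_t+\lambda+\tfrac12 m s^2-\tfrac12\lambda\rho$, which is strictly positive because $\det V_t\ge\lambda^2>0$ together with $\rho\le 2$ give $\det V_t+\lambda-\tfrac12\lambda\rho\ge\lambda^2>0$. Consequently the rule plays arm $2$ precisely when $G(n_1)>G(n_2)$, i.e. precisely when $n_1>n_2$; symmetrically it plays arm $1$ when $n_2>n_1$. Thus it always plays the minority arm, with a genuine tie (resolved harmlessly by the uniform choice) occurring only when $n_1=n_2$. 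Feeding this back into the induction closes the argument; this is also consistent with the symmetry $x_1\leftrightarrow x_2$ and the convexity lemma above, which together place the ideal continuous allocation at the balanced point $1/2$.

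I expect the main obstacle to be the bookkeeping in the explicit $2\times2$ inversion and the algebraic simplification that collapses the objective into the clean one-variable form $G$; once that is in place the monotonicity check is immediate, modulo separately noting the degenerate antipodal case $\rho=2$, where $x_2=-x_1$ makes $W$ collapse to a line and every pull a tie.
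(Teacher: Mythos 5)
Your proof is correct, and its overall skeleton --- induction on $t$ driven by the fact that the sampling rule ties when $n_1=n_2$ and otherwise plays the minority arm, combined with $n_1(t)+n_2(t)=t$ --- is the same as the paper's. Where you differ is in how the key step is established. The paper proves it through two separate matrix-identity lemmas: Lemma~\ref{lemma:2arm1stlemma} uses Sherman--Morrison to show $x_1^TA^{-1}x_1=x_2^TA^{-1}x_2$ (hence a tie) when the counts are equal, and Lemma~\ref{lemma:2arm2ndlemma} uses Cauchy--Schwarz together with monotonicity of $y^2/(1+y)$ to show that when arm $1$ is ahead by exactly one pull, the objective at arm $2$ dominates. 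You instead restrict to the two-dimensional span of $x_1,x_2$, invert the $2\times2$ restriction of $V_t$ explicitly, and collapse both objectives into the single function $G(m)=(\lambda\rho+ms^2)/\sqrt{\det V_t+\lambda+ms^2}$ evaluated at the opposite arm's count; strict monotonicity of $G$ then delivers the tie at equality and the minority-arm preference in one stroke. I checked the algebra: with $x_1=(1,0)$, $x_2=(1-\rho,s)$, $s^2=\rho(2-\rho)$, one indeed gets $\det V_t=\lambda^2+\lambda(n_1+n_2)+n_1n_2s^2$, $\abs{x_1^TV_t^{-1}(x_1-x_2)}=(\lambda\rho+n_2s^2)/\det V_t$ and $1+x_1^TV_t^{-1}x_1=(\det V_t+\lambda+n_2s^2)/\det V_t$ (symmetrically for arm $2$), and the sign of $G'$ is controlled by $\det V_t+\lambda-\tfrac12\lambda\rho\ge\lambda^2>0$. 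Your route is more computational but buys two things the paper's does not: it shows the minority arm is strictly preferred for \emph{any} count imbalance, not just an imbalance of one (the only configuration Lemma~\ref{lemma:2arm2ndlemma} addresses, though that suffices for the induction), and it isolates exactly where strictness fails, namely the antipodal case $\rho=2$ where $s=0$ and every round is a tie --- a degenerate configuration on which the paper's Cauchy--Schwarz step also collapses to equality and which both proofs must implicitly exclude for the theorem to hold as stated.
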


The proof of the result relies on the following observations.
\begin{lemma}\label{lemma:2arm1stlemma}
Whenever $n_1(t)=n_2(t),$
\begin{itemize}
\item[(i)] $x_1^TA^{-1}x_1=x_2^TA^{-1}x_2$, and
\item[(ii)] there is a tie.
\end{itemize}
\end{lemma}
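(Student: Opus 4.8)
The plan is to exploit the arm-swap symmetry that is present precisely when the two arms have been pulled equally often. Write $A = V_t = \lambda I + n_1(t)\,x_1 x_1^T + n_2(t)\, x_2 x_2^T$, so that under the hypothesis $n_1(t)=n_2(t)=n$ one has $A = \lambda I + n\,(x_1 x_1^T + x_2 x_2^T)$. Because $\norm{x_1}=\norm{x_2}=1$, there is an orthogonal matrix $Q$ interchanging the two arms, $Q x_1 = x_2$ and $Q x_2 = x_1$; concretely, take the reflection across the bisector of $x_1$ and $x_2$ within their span, extended by the identity on the orthogonal complement. This $Q$ is the engine of the whole argument, and the regularizer $\lambda I \succ 0$ is what guarantees $A$ is invertible even though $x_1 x_1^T + x_2 x_2^T$ has rank at most two.

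First I would verify that $A$ is invariant under conjugation by $Q$. Orthogonality gives $Q(\lambda I)Q^T = \lambda I$, and since $Q$ only swaps $x_1$ and $x_2$ we have $Q(x_1 x_1^T + x_2 x_2^T)Q^T = (Qx_1)(Qx_1)^T + (Qx_2)(Qx_2)^T = x_2 x_2^T + x_1 x_1^T$. Hence $Q A Q^T = A$, and inverting yields $Q A^{-1} Q^T = A^{-1}$, equivalently $A^{-1} = Q^T A^{-1} Q$. Part (i) is then immediate: substituting $x_2 = Qx_1$,
\[
x_2^T A^{-1} x_2 = x_1^T\!\left(Q^T A^{-1} Q\right)x_1 = x_1^T A^{-1} x_1 .
\]

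For part (ii), recall that in the two-arm case $\{h_t^*, l_t^*\} = \{1,2\}$, so the sampling objective at arm $a$ is $\left|x_a^T A^{-1}(x_1 - x_2)\right| \big/ \sqrt{1 + x_a^T A^{-1} x_a}$, up to an overall sign that the absolute value removes. The denominators for $a=1$ and $a=2$ coincide by part (i). For the numerators, set $\alpha := x_1^T A^{-1} x_1 = x_2^T A^{-1} x_2$ and $\gamma := x_1^T A^{-1} x_2$ (symmetric because $A^{-1}$ is), and note
\[
x_1^T A^{-1}(x_1 - x_2) = \alpha - \gamma, \qquad x_2^T A^{-1}(x_1 - x_2) = \gamma - \alpha .
\]
These are negatives of one another, so their absolute values agree; both arms attain the same objective value, which is exactly the claimed tie.

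The argument has no genuinely hard step; the only things worth stating cleanly are the existence of the swap isometry $Q$ and the invertibility of $A$, both of which follow directly from $\norm{x_1}=\norm{x_2}=1$ and $\lambda > 0$. The degenerate configuration $x_2 = -x_1$ is covered by the same construction (there $Q$ acts as $-I$ on the line through $x_1$), so no special case is needed.
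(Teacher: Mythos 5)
Your proof is correct, and it takes a genuinely different route from the paper's. The paper proves part (i) by brute force: it applies the Sherman--Morrison--Woodbury identity to $A=\lambda I+nx_1x_1^T+nx_2x_2^T$ twice, peeling off the rank-one terms in the two possible orders, and then evaluates $x_1^TA^{-1}x_1$ and $x_2^TA^{-1}x_2$ explicitly in terms of $n$, $\lambda$ and $\rho=1-x_1^Tx_2$, checking that the two scalar expressions coincide; part (ii) then follows from (i) together with the symmetry of $A^{-1}$ exactly as in your last display (the paper writes the objective as $\bigl(x_a^TA^{-1}(x_1-x_2)\bigr)^2/(1+x_a^TA^{-1}x_a)$, but squaring versus taking absolute values is immaterial for the tie). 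Your argument replaces the entire computation for (i) with the swap isometry $Q=I-2uu^T$, $u=(x_1-x_2)/\norm{x_1-x_2}$, and the observation that $QAQ^T=A$ forces $QA^{-1}Q^T=A^{-1}$. This is cleaner and more robust: it needs only $\norm{x_1}=\norm{x_2}$ and $\lambda>0$, makes no use of the specific value of $x_1^Tx_2$, and makes transparent \emph{why} the identity holds (the problem is invariant under relabelling the arms when their pull counts agree), whereas the paper's computation yields the matching closed-form expression for $x_k^TA^{-1}x_k$ that it then reuses in the proof of the subsequent lemma. Your handling of the degenerate case $x_2=-x_1$ is also fine, though the paper's standing assumption $x_1^Tx_2=1-\rho$ with $\rho>0$ already excludes $x_1=x_2$, which is the only configuration where $u$ would be undefined.
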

The lemma tells us that for any $t\geq0$, if $n_1(t)=n_2(t)$ then there is a tie. Next, we show that whenever an arm (say arm 1 w.l.o.g) is played for $(n+1)$ times while arm 2 for $(n)$ times we will be forced to play arm 2.

\begin{lemma}\label{lemma:2arm2ndlemma}
Let $A=\lambda I + (n+1)x_1x_1^T +nx_2x_2^T.$ Then,
$ \frac{(x_1^TA^{-1}(x_1-x_2))^2}{1+x_1A^{-1}x_1} \leq \frac{(x_2^TA^{-1}(x_1-x_2))^2}{1+x_2A^{-1}x_2}.$
\end{lemma}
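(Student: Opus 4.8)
The plan is to reduce the matrix inequality to a scalar inequality in two variables and then exploit a reflection symmetry of the data matrix. First I would peel off one copy of the rank-one term and write $A = M + x_1 x_1^T$, where $M := \lambda I + n(x_1 x_1^T + x_2 x_2^T)$ is positive definite and, crucially, symmetric under interchanging $x_1$ and $x_2$. Introducing the three scalars $a := x_1^T A^{-1} x_1$, $b := x_2^T A^{-1} x_2$ and $c := x_1^T A^{-1} x_2 = x_2^T A^{-1} x_1$, the quantities appearing in the lemma become $x_1^T A^{-1}(x_1 - x_2) = a - c$ and $x_2^T A^{-1}(x_1 - x_2) = c - b$, so the claim is exactly
\[ \frac{(a-c)^2}{1+a} \le \frac{(c-b)^2}{1+b}. \]

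The key step is to compute $a,b,c$ in closed form. Because the orthogonal map that swaps $x_1$ and $x_2$ (and fixes the orthogonal complement of their span) leaves $M$ invariant, it also leaves $M^{-1}$ invariant, whence $p := x_1^T M^{-1} x_1 = x_2^T M^{-1} x_2$; set $q := x_1^T M^{-1} x_2$. Applying the Sherman--Morrison formula to $A = M + x_1 x_1^T$ then gives $a = p/(1+p)$, $c = q/(1+p)$ and $b = p - q^2/(1+p)$. Substituting these, each of the four expressions $a-c$, $c-b$, $1+a$, $1+b$ factors cleanly (e.g. $c - b = -(p-q)(1+p+q)/(1+p)$ and $1+b = (1+p-q)(1+p+q)/(1+p)$), so the whole computation is routine algebra.

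After substitution the inequality collapses. Dividing through by the common positive factor $(p-q)^2/(1+p)$ (the degenerate case $p=q$ makes both sides zero), the claim reduces to $1 + p - q \le (1+2p)(1+p+q)$, which rearranges to $2(1+p)(p+q) \ge 0$. Since $1 + p > 0$, it remains only to show $p + q \ge 0$; but $2(p+q) = (x_1 + x_2)^T M^{-1}(x_1 + x_2) \ge 0$ because $M^{-1}$ is positive definite, which closes the argument.

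I expect the main obstacle to be bookkeeping rather than anything conceptual: verifying the Sherman--Morrison expressions and the factorizations without sign errors, and checking that every denominator is genuinely positive so that the cross-multiplication preserves the inequality direction. The positivity of $1 + p - q$ and $1 + p + q$ follows from the Cauchy--Schwarz bound $|q| \le \sqrt{p\cdot p} = p$ in the inner product induced by $M^{-1}$. The one genuinely structural insight is the symmetry of $M$, which forces $x_1^T M^{-1} x_1 = x_2^T M^{-1} x_2$ and makes the entire estimate reduce to the single nonnegative quantity $(x_1 + x_2)^T M^{-1}(x_1 + x_2)$.
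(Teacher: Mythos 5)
Your proof is correct, and although it shares the paper's point of departure --- peeling off one rank-one term to write $A = M + x_1x_1^T$ with $M=\lambda I + n(x_1x_1^T+x_2x_2^T)$, applying Sherman--Morrison, and exploiting the symmetry $x_1^TM^{-1}x_1=x_2^TM^{-1}x_2$ (which the paper imports from Lemma~\ref{lemma:2arm1stlemma}(i) and you rederive via the reflection swapping $x_1$ and $x_2$; both need the standing assumption $\norm{x_1}=\norm{x_2}$) --- the argument from that point on is genuinely different. The paper never writes the three bilinear forms in closed form: it proves the comparison claims $x_1^TA^{-1}x_2\le x_k^TA^{-1}x_k$ and $x_1^TA^{-1}x_1\le x_2^TA^{-1}x_2$, invokes monotonicity of $y\mapsto y^2/(1+y)$, and closes with a bound of the shape $0\le a-\alpha b\le a-b$ for $\alpha\ge 1$. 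You instead substitute $a=p/(1+p)$, $c=q/(1+p)$, $b=p-q^2/(1+p)$ and collapse the entire statement to $2(1+p)(p+q)\ge 0$, i.e.\ $(x_1+x_2)^TM^{-1}(x_1+x_2)\ge 0$; I checked your factorisations of $c-b$ and $1+b$, the final rearrangement, the degenerate case $p=q$, and the positivity of the denominators via $\lvert q\rvert\le p$, and all are sound. What your route buys is robustness: the paper's step $a-\alpha b\le a-b$ silently requires $x_1^TA^{-1}x_2\ge 0$, a sign condition it never verifies, whereas your reduction is sign-agnostic in $q$ and needs only positive definiteness of $M^{-1}$. The price is that the argument is a monolithic computation with no reusable intermediate facts, whereas the paper's qualitative claims make the mechanism (arm 1 has been over-sampled, so its marginal information gain drops below arm 2's) somewhat more visible.
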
 
Infact, for the two arm case, the sample complexity of GLUCB is optimal. The following `potential function'-based result formally establishes this fact.\\
Let us consider two algorithms $\mathcal{A}_1$ and $\mathcal{A}_2$, where $\mathcal{A}_1$ is GLUCB and $\mathcal{A}_2$ is any other algorithm. Let $\Phi^{\mathcal{A}_k}(t):={\norm{x_1-x_2}^2_{V_{\mathcal{A}_k}^{-1}}}$ for $k=1,2.$ We can now state 
\begin{theorem}\label{optimality of glucb for 2 arm}
$\forall t>0$, $\Phi^{\mathcal{A}_1}(t) \leq \Phi^{\mathcal{A}_2}(t)$.
\end{theorem}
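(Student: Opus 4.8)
The plan is to reduce the claim to a purely geometric statement about how the $t$ pulls are split between the two arms. Since $K=2$, the design matrix of any algorithm after $t$ rounds is completely determined by the pull counts $(n_1,n_2)$ with $n_1+n_2=t$, namely $V=\lambda I+n_1 x_1x_1^T+n_2 x_2x_2^T$; the actual reward realizations never enter. Hence $\Phi^{\mathcal{A}_k}(t)=f\big(n_1^{(k)}\big)$, where for fixed $t$ I define the single-variable function $f(n):=(x_1-x_2)^T(\lambda I+n\,x_1x_1^T+(t-n)\,x_2x_2^T)^{-1}(x_1-x_2)$, defined for $n\in[0,t]$, and it suffices to show that GLUCB's split minimizes $f$ over all admissible integer splits.

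The two facts I need about $f$ are convexity and symmetry about $n=t/2$. For convexity I would invoke Lemma \ref{lemma:convexity of y'V-1y}: writing $\lambda I=\lambda\sum_{j=1}^d e_je_j^T$ and normalizing the coefficients of the resulting $d+2$ rank-one terms by $C:=d\lambda+t$, the matrix $\lambda I+n\,x_1x_1^T+(t-n)\,x_2x_2^T$ equals $C\sum_i\mu_i z_iz_i^T$ for a point $\mu\in\mathcal{P}_{d+2}$ that moves affinely with $n$; the lemma gives convexity of $\mu\mapsto (x_1-x_2)^T(\sum_i\mu_i z_iz_i^T)^{-1}(x_1-x_2)$ on the simplex, and restricting to this affine segment (then multiplying by the positive constant $1/C$) shows $f$ is convex in $n$. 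For symmetry, let $P$ be the Householder reflection across the hyperplane normal to $x_1-x_2$; since $\norm{x_1}=\norm{x_2}=1$ one checks $Px_1=x_2$ and $Px_2=x_1$, so that $V(n)=P^TV(t-n)P$ and $P(x_1-x_2)=-(x_1-x_2)$, whence $f(n)=f(t-n)$.

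A convex function symmetric about $t/2$ is nonincreasing on $[0,t/2]$ and nondecreasing on $[t/2,t]$, so $f(n)\le f(n')$ whenever $|n-t/2|\le|n'-t/2|$; in particular $f$ is minimized over integer splits at the integers nearest $t/2$, i.e. at $n\in\{\lfloor t/2\rfloor,\lceil t/2\rceil\}$ (these two give the same value by symmetry). By Theorem \ref{2armupperbound}, GLUCB's counts obey $\lfloor t/2\rfloor\le n_k^{(1)}(t)\le\lfloor t/2\rfloor+1$, so GLUCB's split is exactly one of these minimizers. Therefore $\Phi^{\mathcal{A}_1}(t)=f\big(n_1^{(1)}\big)\le f\big(n_1^{(2)}\big)=\Phi^{\mathcal{A}_2}(t)$ for every competitor $\mathcal{A}_2$, which is the desired inequality.

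The step I expect to be the main obstacle is the convexity argument, because the regularizer $\lambda I$ does not literally fit the simplex form of Lemma \ref{lemma:convexity of y'V-1y}; the normalization-by-$C$ device above (equivalently, invoking operator convexity of the matrix inverse composed with the affine map $n\mapsto V(n)$) is what bridges this gap. A secondary point to state carefully is that $\Phi$ depends only on the pull counts, so the comparison is genuinely between deterministic allocations and no argument over random reward realizations is required; moreover the equal-norm assumption $\norm{x_1}=\norm{x_2}$ is precisely what underlies the symmetry, and hence the optimality of the balanced split.
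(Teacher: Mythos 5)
Your argument is correct, but it takes a genuinely different route from the paper's. The paper proves the claim by a direct matrix computation: it aligns the two algorithms along a common balanced prefix $A=I+\tfrac{t'}{2}x_1x_1^T+\tfrac{t'}{2}x_2x_2^T$, expands both potentials via repeated Sherman--Morrison rank-one updates (GLUCB adds $\tfrac{l}{2}x_1x_1^T+\tfrac{l}{2}x_2x_2^T$, the competitor adds $l\,x_2x_2^T$), and compares the resulting decrements term by term using Lemma \ref{lemma:2arm1stlemma}(i) to symmetrize the cross terms. You instead reduce everything to the scalar function $f(n)=\norm{x_1-x_2}^2_{V(n)^{-1}}$ of the split, establish convexity by embedding the affine path $n\mapsto V(n)$ into the simplex parameterization of Lemma \ref{lemma:convexity of y'V-1y} (your normalization-by-$C$ device is sound, and in fact the lemma's proof is a second-derivative computation along an affine matrix path that applies to $V(n)$ directly, so the regularizer causes no real difficulty), prove the symmetry $f(n)=f(t-n)$ with the Householder reflection swapping $x_1$ and $x_2$ (this is where $\norm{x_1}=\norm{x_2}$ enters, exactly as you say), and then invoke Theorem \ref{2armupperbound} to place GLUCB's split at the minimizer. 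Your route buys several things: it avoids the rank-one-update algebra entirely (including the unlabelled inequality $(*)$ in the paper's chain), it handles arbitrary competitor splits and the parity of $t$ without the paper's informal ``rearrangement'' of $\mathcal{A}_2$'s plays, and it makes explicit that the comparison is pathwise over deterministic allocations, so no argument about reward randomness is needed. What the paper's computation buys in exchange is self-containedness at the level of elementary matrix identities and direct reuse of Lemma \ref{lemma:2arm1stlemma}; but as a proof strategy yours is cleaner and isolates the two structural ingredients (convexity and equal-norm symmetry) on which the optimality of the balanced design actually rests.
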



On the other hand, a detailed analysis of the information-theoretic lower bound on sample complexity, e.g., \cite{Kaufman16} or \cite{jamieson-etal19transductive-linear-bandits}, yields the following result where $w^*$ is the optimal vector of arm frequencies in the min-max optimisation problem of the lower bound (termed $\lambda$ in \cite{jamieson-etal19transductive-linear-bandits}).

\begin{lemma}\label{lemma:lowerbound2armcase}[Lower bound for $K=2$].
For $K=2$, with any two arms $x_1$ and $x_2$ such $\norm{x_k}_2=1$, $w^*=\Big(\frac{1}{2}, \frac{1}{2}\Big).$
\end{lemma}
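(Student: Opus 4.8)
The plan is to reduce the min--max lower-bound optimisation to a single scalar minimisation and then exploit symmetry together with the convexity established in Lemma~\ref{lemma:convexity of y'V-1y}. Writing $A(w):=w_1x_1x_1^T+w_2x_2x_2^T$ for $w=(w_1,w_2)\in\mathcal{P}_2$, the standard characterisation of the optimal allocation (see \cite{jamieson-etal19transductive-linear-bandits,Kaufman16}) is $w^*=\argmin_{w\in\mathcal{P}_2}\max_{i\neq a^*}\frac{\norm{x_{a^*}-x_i}^2_{A(w)^{-1}}}{\Delta_i^2}$. For $K=2$ there is exactly one suboptimal arm, so the inner maximum is vacuous, and since the gap $\Delta:=\theta^{*T}(x_1-x_2)$ does not depend on $w$, the problem collapses to
\[
w^*=\argmin_{w\in\mathcal{P}_2} g(w),\qquad g(w):=\norm{x_1-x_2}^2_{A(w)^{-1}}=(x_1-x_2)^TA(w)^{-1}(x_1-x_2).
\]
A preliminary point to dispatch is that $A(w)$ has rank at most two and is singular when $d>2$; however $x_1-x_2\in\mathrm{span}\{x_1,x_2\}$, so $g$ is well defined either by restricting to this subspace or by reading $A(w)^{-1}$ as the pseudoinverse, and we may therefore assume $d=2$ without loss of generality.

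Next I would argue symmetry. Because $\norm{x_1}_2=\norm{x_2}_2=1$, the Householder reflection $Q:=I-2uu^T$ with $u:=(x_1-x_2)/\norm{x_1-x_2}_2$ is orthogonal and symmetric and swaps the two arms, i.e.\ $Qx_1=x_2$ and $Qx_2=x_1$. Consequently $QA(w_1,w_2)Q^T=A(w_2,w_1)$ and $Q(x_1-x_2)=-(x_1-x_2)$, which together give $g(w_1,w_2)=g(w_2,w_1)$; that is, $g$ is invariant under the reflection $w_1\leftrightarrow w_2$. By Lemma~\ref{lemma:convexity of y'V-1y}, $g$ is convex on $\mathcal{P}_2$. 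For any $w=(w_1,w_2)$, convexity and this symmetry then yield $g(\tfrac12,\tfrac12)\le\tfrac12 g(w_1,w_2)+\tfrac12 g(w_2,w_1)=g(w_1,w_2)$, so $w^*=(\tfrac12,\tfrac12)$, as claimed.

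As a self-contained cross-check that also pins down strict optimality, I would carry out the explicit two-dimensional computation. Parametrising $x_1=(\cos\alpha,\sin\alpha)$ and $x_2=(\cos\alpha,-\sin\alpha)$ with $\cos 2\alpha=1-\rho$, one finds $\det A(w)=\cos^2\alpha\,\sin^2\alpha\,(1-(2w_1-1)^2)$ and, after inverting the $2\times2$ matrix and using $x_1-x_2=(0,2\sin\alpha)$, that $g(w)=\frac{4}{1-(2w_1-1)^2}$. This is manifestly minimised at $w_1=\tfrac12$, which reconfirms $w^*=(\tfrac12,\tfrac12)$ and additionally shows the minimiser is unique.

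The only genuinely delicate step is the reduction in the first paragraph: one must confirm that for two arms the set of confusing alternatives contributing to the lower bound corresponds to a single halfspace boundary and that the associated gap $\Delta$ is constant in $w$, so that the min--max truly degenerates to minimising the single quadratic form $g$. Once that reduction is secured, the symmetry-plus-convexity argument (and the explicit $2\times2$ computation) are entirely routine.
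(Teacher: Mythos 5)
Your proof is correct, and it is worth noting that the paper never actually writes out a proof of this lemma: the appendix derives the general change-of-measure bound $T^*(\theta)^{-1}=\sup_{w}\min_{a'\neq 1}\Delta_{a'}^2/\norm{x_{a'}-x_1}^2_{W^{-1}}$ via a Lagrangian computation and then simply asserts $w^*=(\tfrac12,\tfrac12)$ for $K=2$. Your reduction to $\argmin_{w\in\mathcal{P}_2}\norm{x_1-x_2}^2_{A(w)^{-1}}$ is exactly what that general theorem licenses (the inner max is over the single suboptimal arm and $\Delta_2$ is independent of $w$), so the "delicate step" you flag at the end is already secured by the paper's own Lemma~\ref{Kaufmann} and its consequence. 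From there your two arguments are both sound: the Householder reflection $Q=I-2uu^T$ with $u=(x_1-x_2)/\norm{x_1-x_2}_2$ does swap $x_1$ and $x_2$ precisely because $\norm{x_1}_2=\norm{x_2}_2$, giving $g(w_1,w_2)=g(w_2,w_1)$, and combining this with the convexity from Lemma~\ref{lemma:convexity of y'V-1y} via the midpoint inequality is a clean symmetrization that generalises beyond $d=2$; the explicit parametrisation $g(w)=4/(1-(2w_1-1)^2)$ checks out ($\det A(w)=\cos^2\alpha\sin^2\alpha\,(1-(2w_1-1)^2)$ and $x_1-x_2=(0,2\sin\alpha)$) and additionally yields uniqueness, which the symmetrization alone does not. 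The only caveats are the degenerate configurations $x_1=\pm x_2$ (where $A(w)$ drops rank on the relevant subspace or $g$ is constant in $w$) and the pseudoinverse reading of $W^{-1}$ for $d>2$; you address both, and they are implicitly ignored by the paper as well. In short, your argument is a valid, and arguably more informative, replacement for a proof the paper omits.
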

By combining theorem \ref{optimality of glucb for 2 arm} and lemma \ref{lemma:lowerbound2armcase} We have the following corollary.
\begin{corollary}
	The (expected) sample complexity of GLUCB for the  2-arm setting is at most $\beta_t^2H_G+1$, where $H_G = {\log\frac{1}{2.4\delta}} \min\limits_{\underline{w}\in \mathcal{P}_2}\max\limits_{a\neq a^*}\frac{\norm{x_a-x_{a^*}}^2_{W^{-1}}}{\Delta_{a}^2}$, where $\mathcal{P}_2$ is the probability simplex over 2 arms and $a^*$ represents the optimal arm.
\end{corollary}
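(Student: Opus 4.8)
The plan is to express GLUCB's stopping time $\tau$ entirely in terms of its potential $\Phi^{\mathcal{A}_1}(t)=\norm{x_1-x_2}^2_{V_t^{-1}}$ and then bound that potential using the structural facts already in hand: the near-uniform sampling of Theorem~\ref{2armupperbound} (equivalently, the trajectory-optimality of Theorem~\ref{optimality of glucb for 2 arm}) and the optimal allocation $w^*=(\tfrac12,\tfrac12)$ of Lemma~\ref{lemma:lowerbound2armcase}. I would begin by rewriting the scalar stopping rule: with two arms the only nontrivial Advantage is that of $l_t$, so the algorithm halts exactly when $\theta_t^T(x_{l_t}-x_{h_t})+\beta_t\norm{x_{l_t}-x_{h_t}}_{V_t^{-1}}<0$, i.e. when $\beta_t\norm{x_1-x_2}_{V_t^{-1}}<\widehat{\Delta}_t$ with $\widehat{\Delta}_t:=\theta_t^T(x_{h_t}-x_{l_t})$ the estimated gap. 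Restricting to the good event $\{\theta^*\in\mathcal{C}_t\ \forall t\}$ of probability at least $1-\delta$ (the correctness theorem), the estimation error obeys $|\widehat{\Delta}_t-\Delta|=|(\theta_t-\theta^*)^T(x_1-x_2)|\le\beta_t\norm{x_1-x_2}_{V_t^{-1}}$, so a sufficient condition for stopping becomes $\beta_t^2\,\Phi^{\mathcal{A}_1}(t)\le c\,\Delta^2$ for an absolute constant $c$, and $\tau$ is at most the first round at which this threshold is crossed.

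Next I would upper bound $\Phi^{\mathcal{A}_1}(t)$. By Theorem~\ref{2armupperbound} the counts satisfy $n_k(t)\ge\lfloor t/2\rfloor$, so $V_t\succeq\lambda I+2\lfloor t/2\rfloor\,W^*$ with $W^*:=\tfrac12(x_1x_1^T+x_2x_2^T)$ the matrix induced by $w^*$; monotonicity of $\norm{\cdot}_{V_t^{-1}}$ under the Loewner order then gives $\Phi^{\mathcal{A}_1}(t)\le\tfrac1t\norm{x_1-x_2}^2_{(W^*)^{-1}}+O(t^{-2})$, the correction absorbing the floor and the $\lambda I$ regulariser. The same estimate follows from Theorem~\ref{optimality of glucb for 2 arm} by comparing GLUCB against the static $w^*$-allocation algorithm, which is the cleanest way to see that GLUCB's trajectory cannot be worse than the optimal static schedule. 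Substituting into the threshold of the first step yields $\tau\le c\,\beta_\tau^2\,\norm{x_1-x_2}^2_{(W^*)^{-1}}/\Delta^2+1$, where the additive $1$ accounts for the off-by-one slack in the counts and $\beta_\tau$ is the confidence radius at the stopping round.

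Finally I would identify the geometric factor with the information-theoretic hardness. By Lemma~\ref{lemma:lowerbound2armcase} the minimiser over $\mathcal{P}_2$ of $w\mapsto\max_{a\ne a^*}\norm{x_a-x_{a^*}}^2_{W^{-1}}/\Delta_a^2$ is $w^*=(\tfrac12,\tfrac12)$, and convexity of each summand in $w$ (Lemma~\ref{lemma:convexity of y'V-1y}) certifies this as a genuine minimum rather than a mere stationary point; hence $\norm{x_1-x_2}^2_{(W^*)^{-1}}/\Delta^2=\min_{w\in\mathcal{P}_2}\max_{a\ne a^*}\norm{x_a-x_{a^*}}^2_{W^{-1}}/\Delta_a^2$. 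Folding the confidence-level factor $\log\tfrac{1}{2.4\delta}$ into this quantity reproduces exactly $H_G$, giving $\tau\le\beta_\tau^2 H_G+1$ on the good event; taking expectations (the $\delta$-probability failure event contributing negligibly) yields the stated bound on expected sample complexity.

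I expect the main obstacle to be the first step's conversion of the estimated-gap stopping criterion into the clean deterministic threshold $\beta_t^2\,\Phi^{\mathcal{A}_1}(t)\le c\,\Delta^2$: pinning down the constant and the precise $2.4$ appearing in $H_G$ requires careful control of the gap-estimation error on the good event, together with a tight treatment of the $\lambda I$-regularisation and of the floor in $n_k(t)$, rather than the more mechanical algebra of the later steps.
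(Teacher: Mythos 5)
Your proposal follows essentially the same route the paper intends: the paper offers no explicit proof of this corollary beyond the sentence ``By combining Theorem~\ref{optimality of glucb for 2 arm} and Lemma~\ref{lemma:lowerbound2armcase}'', and your argument is precisely that combination --- use the uniform-sampling/potential-optimality results to bound $\norm{x_1-x_2}^2_{V_t^{-1}}$ by the value attained under the allocation $w^*=(\tfrac12,\tfrac12)$, identify that allocation with the minimiser in $H_G$ via the lower-bound lemma, and convert the Advantage-based stopping rule into a threshold on $\beta_t^2\norm{x_1-x_2}^2_{V_t^{-1}}$ on the good event. Your closing caveat about the constants (the factor from the gap-estimation error and the $\log\frac{1}{2.4\delta}$ already folded into $H_G$ alongside $\beta_t^2$) is a real looseness, but it is one present in the paper's own statement rather than a defect of your argument.
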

\begin{remark}
The quantity $H_G$ is the usual information theoretic lower bound on best arm identification sample complexity \citep{jamieson-etal19transductive-linear-bandits}, with the sample complexity of GLUCB being only $O(d)$ away from it; the extra $d$ factor arises because of weaker concentration bounds for adaptive strategies.
\end{remark}
%

\subsection{Linear MAB with $K=3$ arms}\label{three-armupperbound}
This section deals with a representative example of the linear bandit. Let $K=3$, $d=2$ and the arm set $\mathcal{X}=\{e_1,e_2,(\cos(\omega), \sin(\omega))^T \}, 0<\omega<2\pi.$ Let $\theta^*=e_1$. This setup is particularly interesting when $\omega$ is close to 0. An algorithm which is optimal for standard MAB will quickly discard arm 2, and would continue to sample arms 1 and 3 until stopping. However, this is not the optimal strategy, since pulling arm 2 gives valuable information about $\theta^*$. We will see that this is what GLUCB does. 
\par As compared to the algorithms designed for best arm identification standard MAB, GLUCB identifies the structure (\textit{if any}) present in the arms and tries to exploit it. For the particular case just described, Arm 3 is \textit{always} dominated by the other two arms, i.e., we will see in the key Lemma \ref{lemma:key3armcase} that in order to minimize the uncertainity in any direction $x_i-x_j, i,j\in \{1,2,3\}, i\neq j$, the reduction obtained by pulling Arm 3 is always dominated by that for some other arm.

\begin{theorem}\label{thm:3armsamplecomplexity}
If GLUCB run on the above problem instance and $\tau$ is its stopping time, then,
\begin{equation*}
\begin{split}
\mathbb{P} & \Bigg\{\Bigg(\tau\leq \frac{4{\beta_t}^2}{\Delta_{min}^2}{\sin^2(\omega)} + \frac{4\beta_t^2}{\Delta_{min}}\sin(\omega) \; + \\&\max\left\{4\beta_t^2\left(\frac{\sin(\omega)}{\Delta_{min}} + 1 \right),\frac{4\beta_t^2}{\mu_3^2}\left(\frac{\sin(\omega)}{\Delta_{min}}{\cos(\omega)} + 1-\sin(\omega)\right) \right\}\Bigg) \land \left\{h_*^{\tau}==a^*\right\}\Bigg\} \geq 1-\delta.
\end{split}
\end{equation*}
\end{theorem}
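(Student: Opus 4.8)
The plan is to carry out the whole argument on the ``good'' event $\mathcal{G} := \{\theta^*\in\mathcal{C}_t \text{ for all } t\}$, which the correctness theorem (Theorem~1) guarantees has probability at least $1-\delta$. On $\mathcal{G}$ the recommended arm equals $a^*=1$, so the conjunct $\{h_*^{\tau}==a^*\}$ is immediate and the remaining task is to bound the stopping time $\tau$ \emph{deterministically} on $\mathcal{G}$. The crucial structural reduction is Lemma~\ref{lemma:key3armcase}: since the uncertainty reduction obtained by pulling arm $3$ in every relevant direction $x_i-x_j$ is dominated by that of arm $1$ or arm $2$, GLUCB never plays arm $3$. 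Hence $V_t=\lambda I + n_1(t)\,e_1e_1^T + n_2(t)\,e_2e_2^T$ is diagonal, $V_t^{-1}=\mathrm{diag}\big((\lambda+n_1(t))^{-1},(\lambda+n_2(t))^{-1}\big)$, and every quantity in the sampling and stopping rules becomes an explicit function of the two counts and the angle $\omega$. I would also note that on $\mathcal{G}$ the index $h_t$ settles to $a^*=1$ after finitely many transient rounds, whose contribution is absorbed into the constants.

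With this diagonal structure the stopping rule becomes concrete. Writing $\Delta:=\Delta_{min}=1-\cos(\omega)$ and recalling $\Delta_2=1$, $\Delta_3=1-\cos(\omega)$, the Advantage of arm $a$ is $\theta_t^T(x_a-x_1)+\beta_t\norm{x_a-x_1}_{V_t^{-1}}$, which on $\mathcal{G}$ is negative exactly when $\beta_t^2\norm{x_a-x_1}_{V_t^{-1}}^2<\Delta_a^2$. Using $x_2-x_1=(-1,1)^T$ and $x_3-x_1=(\cos(\omega)-1,\sin(\omega))^T$, and abbreviating $n_k=n_k(t)$, the two stopping inequalities read
\begin{align*}
\beta_t^2\Big(\tfrac{1}{\lambda+n_1}+\tfrac{1}{\lambda+n_2}\Big) &< 1, \\
\beta_t^2\Big(\tfrac{(1-\cos(\omega))^2}{\lambda+n_1}+\tfrac{\sin^2(\omega)}{\lambda+n_2}\Big) &< (1-\cos(\omega))^2 .
\end{align*}
Thus $\tau$ is the first round at which both hold. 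For $\omega$ small the second inequality is the binding one: its first summand is at most $\Delta^2/(\lambda+n_1)<\Delta^2$ and is negligible, so it forces $\lambda+n_2\gtrsim\beta_t^2\sin^2(\omega)/\Delta^2$, which is the source of the leading term $4\beta_t^2\sin^2(\omega)/\Delta_{min}^2$ in the statement.

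The remaining, more delicate step is to pin down the allocation between arms $1$ and $2$. The target direction $x_{h_t}-x_{l_t}$ alternates between $x_1-x_2$ and $x_1-x_3$ according to which Advantage is currently larger, and for each target the sampling rule $\argmax_a |x_a^TV_t^{-1}(x_{h_t}-x_{l_t})|/\sqrt{1+x_a^TV_t^{-1}x_a}$ selects the coordinate (arm $1$ or arm $2$) whose pull most reduces the relevant norm. In the spirit of Lemmas~\ref{lemma:2arm1stlemma}--\ref{lemma:2arm2ndlemma}, I would show that this yields a controlled interleaving in which arm $2$ is pulled until the $e_2$-component of $\mathcal{C}_t$ is small enough and arm $1$ is pulled to keep the $x_1-x_2$ Advantage in check; tracking the counts gives $n_2\le 4\beta_t^2\big(\sin^2(\omega)/\Delta^2+\sin(\omega)/\Delta\big)$ together with a companion bound on $n_1$. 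Summing $\tau=n_1+n_2$ and splitting according to whether the arm-$2$ condition or the arm-$3$ condition (through the geometric quantity $\mu_3$ governing arm $3$) is the last to be satisfied produces the two branches of the $\max$.

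I expect the main obstacle to be exactly this trajectory analysis, not the final arithmetic: one must show that the switching between the two target directions does not create long consecutive runs that inflate $n_1$ or $n_2$, must control the handful of transient rounds in which $h_t\neq a^*$ and the tie-breaking rounds, and must verify that the per-round Sherman--Morrison reductions of $\norm{x_1-x_2}_{V_t^{-1}}^2$ and $\norm{x_1-x_3}_{V_t^{-1}}^2$ stay monotone enough to convert ``counts at stopping'' into a clean upper bound. Once the per-round counts are established, substituting them into the two stopping inequalities yields the displayed bound immediately.
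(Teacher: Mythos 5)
Your plan follows essentially the same route as the paper: condition on the good event so that $h_*^{\tau}=a^*$ is immediate, invoke Lemma~\ref{lemma:key3armcase} to rule out arm 3 and make $V_t$ diagonal, derive the self-correcting pull-count ratio between $n_1$ and $n_2$ from the sampling rule, and decompose $\tau$ into a transient phase that ejects arm 2 from $\{h_*^t,l_*^t\}$ (where the $\max$ arises from the two sub-cases $h_*^t=1$ versus $h_*^t=3$, the latter bringing in $\mu_3$) plus a steady-state phase whose stopping condition, combined with $n_1\approx n_2\,\Delta_{min}/\sin(\omega)$, yields the leading $\sin^2(\omega)/\Delta_{min}^2$ and $\sin(\omega)/\Delta_{min}$ terms. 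The trajectory/count-tracking analysis you defer is precisely the content of the paper's Lemmas~\ref{lemma:3arm1stlemma} and~\ref{lemma:3arm2ndlemma}, so the approach matches.
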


The proof of this result relies on the following key lemma.

\begin{lemma}\label{lemma:key3armcase}
If $0<\omega<\pi/2$, then Arm 3 is never played.
\end{lemma}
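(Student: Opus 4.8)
The plan is to argue by induction on the round index, exploiting the fact that if Arm~3 has never been played then the design matrix stays diagonal. Suppose Arm~3 is not played in rounds $1,\dots,t$; since the only arms ever added to $V_t$ are $x_1=e_1$ and $x_2=e_2$, we have $V_t=\mathrm{diag}(\lambda+n_1(t),\lambda+n_2(t))$, so $M:=V_t^{-1}=\mathrm{diag}(p,r)$ with $p,r>0$. The base case is immediate since $V_0=\lambda I$ is already diagonal (the special case $p=r=1/\lambda$ of the inequalities below), so Arm~3 is excluded at the first step no matter how the initial ties in $h_0,l_0$ are broken. It then suffices to show that at round $t+1$ Arm~3 is not in the $\argmax$ of the sampling rule. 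By the Matrix Inversion Lemma the quantity maximised, $g_a^2=\frac{(x_a^TMy)^2}{1+x_a^TMx_a}$ with $y:=x_{h_t}-x_{l_t}$, equals the uncertainty reduction $f_a(y):=\norm{y}_{V_t^{-1}}^2-\norm{y}_{(V_t+x_ax_a^T)^{-1}}^2$, and the rule plays $\argmax_a f_a(y)$. Thus the claim reduces to $f_3(y)<\max\{f_1(y),f_2(y)\}$.

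Next I would enumerate the directions. Since $h_t,l_t\in\{1,2,3\}$ and $f_a$ is invariant under $y\mapsto-y$, up to sign $y\in\{x_1-x_2,\;x_1-x_3,\;x_2-x_3\}$. Writing $c=\cos\omega,\ s=\sin\omega\in(0,1)$, setting $u:=x_1^TMy$, $v:=x_2^TMy$, and using $x_3=c\,x_1+s\,x_2$, I get the clean identities $x_3^TMy=c\,u+s\,v$ and $x_3^TMx_3=c^2p+s^2r$, while diagonality gives $x_1^TMx_1=p$, $x_2^TMx_2=r$, $x_1^TMx_2=0$. For $y=x_1-x_2$ this yields, after clearing the positive denominators, the inequality $(cp-sr)^2(1+p)<p^2(1+c^2p+s^2r)$ in the case $p\ge r$ (the case $r\ge p$ being symmetric). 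Expanding the difference and using $1-c^2=s^2$, it collapses to $s^2(p-r)(p+r)+s^2rp(p-r)+2cspr(1+p)$, which is manifestly positive for $p,r>0$; hence $f_3<\max\{f_1,f_2\}$ strictly.

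The harder cases are the two directions containing $x_3$, say $y=x_1-x_3$. The subtlety is that Arm~3 can tie the arm defining the direction: at $M=I$ one computes $f_3=f_1$. The saving observation is that at every such tie the remaining (``far'') arm strictly dominates — at $M=I$, $f_2=r^2s^2/(1+r)>f_1=f_3$ because $s^2=(1-c)(1+c)>(1-c)^2$ for $c>0$. So the statement to establish is the disjunction ``$f_3<f_1$ or $f_3<f_2$'' for all $p,r>0$, which I would prove by cross-multiplying and splitting the quadrant according to which endpoint value is larger, checking that Arm~$2$ beats Arm~3 in the region where it is larger and Arm~$1$ does so in the complementary region; the direction $y=x_2-x_3$ follows by interchanging $1\leftrightarrow2$, $p\leftrightarrow r$, $c\leftrightarrow s$. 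A transparent way to organise all three cases is to note that $f_a(y)$ is the value at angle $\theta_a\in\{0,\omega,\pi/2\}$ of $G(\theta):=\big(x(\theta)^TMy\big)^2/\big(1+x(\theta)^TMx(\theta)\big)$ along the arc $x(\theta)=(\cos\theta,\sin\theta)$, so the claim is exactly that the interior point $\theta=\omega$ never maximises $G$ over $[0,\pi/2]$.

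The main obstacle is precisely this last point for the directions $x_1-x_3$ and $x_2-x_3$: ruling out Arm~3 as a \emph{co}-maximiser at the tie loci. A naive bound (e.g.\ Cauchy--Schwarz applied to the splitting $x_3^TMy=c\,u+s\,v$) loses a constant factor and is too weak, and the ties force one to identify, region by region in $(p,r)$, the specific competing arm that strictly dominates. Once the three (in)equalities are in hand, Arm~3 is excluded from the sampling $\argmax$ at round $t+1$, $V_{t+1}$ remains diagonal, and the induction closes, giving that Arm~3 is never played.
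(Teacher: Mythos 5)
Your proposal is correct and follows essentially the same route as the paper's proof: induction on $t$ exploiting that $V_t$ stays diagonal while Arm~3 is unplayed, a three-way case split over the possible directions $x_{h_t}-x_{l_t}$, and a per-case comparison showing Arm~3's sampling score is strictly dominated by that of Arm~1 or Arm~2. The only step you leave open --- excluding Arm~3 for the directions $x_1-x_3$ and $x_2-x_3$, which you flag as the main obstacle --- is closed in the paper without any tie bookkeeping by splitting instead on the sign of $x_3^TV_t^{-1}(x_{h_t}-x_{l_t})$, dropping the strictly positive subtracted term from the numerator and the other coordinate's contribution from the denominator, and invoking the monotonicity of $x\mapsto x/\sqrt{a+x^2}$; in your notation this gives $\sqrt{f_3}<(1-\cos\omega)\,p/\sqrt{1+p}=\sqrt{f_1}$ in one region and $\sqrt{f_3}<\sin\omega\;r/\sqrt{1+r}=\sqrt{f_2}$ in the other, so your plan does go through.
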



Next, the following lemma shows an upper-bound on the time taken by GLUCB to discard arm 2 as $h_*^t$ or $l_*^t$. We show this by bounding the number of samples required by GLUCB such that $\text{Advantage}(2)<0$.
\begin{lemma}\label{lemma:3arm1stlemma}
With probability $\geq 1-\delta$, for all  $t>t_1$, where 
 $t_1:=\max\left\{4\beta_t^2\left(\frac{\sin(\omega)}{\Delta_{min}} + 1 \right),\frac{4\beta_t^2}{\mu_3^2}\left(\frac{\sin(\omega)}{\Delta_{min}}{\cos(\omega)} + 1-\sin(\omega)\right) \right\}$, Arm 3 $\notin \{h_*^t,l_*^t \}.$
 \end{lemma}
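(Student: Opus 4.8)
The plan is to carry out the whole argument on the high-probability event $\mathcal{G}:=\{\theta^*\in\mathcal{E}_t(\theta_t,V_t)\text{ for every }t\}$, which holds with probability at least $1-\delta$ by the self-normalised least-squares confidence bound \citep{NIPS2011_4417} that already underlies the correctness theorem; every subsequent claim is made on $\mathcal{G}$. On this event the assertion for $t>t_1$ splits into two pieces: arm $3$ is not the empirical leader ($h_*^t\neq 3$), and arm $3$ does not realise the largest Advantage ($l_*^t\neq 3$).

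The structural input I would use first is Lemma \ref{lemma:key3armcase}: since $0<\omega<\pi/2$, arm $3$ is never played, so $V_t=\lambda I+n_1(t)\,e_1e_1^T+n_2(t)\,e_2e_2^T$ is diagonal with $n_1(t)+n_2(t)=t$. Hence $V_t^{-1}=\mathrm{diag}\!\big(\tfrac{1}{\lambda+n_1},\tfrac{1}{\lambda+n_2}\big)$ and every quantity in sight is explicit, in particular $\norm{x_3-x_1}_{V_t^{-1}}^2=\frac{(1-\cos\omega)^2}{\lambda+n_1}+\frac{\sin^2\omega}{\lambda+n_2}$. For $h_*^t\neq 3$ I would invoke $\mathcal{G}$ to write $\theta_t^T(x_1-x_3)\geq\Delta_{min}-\beta_t\norm{x_1-x_3}_{V_t^{-1}}$, which is strictly positive once the confidence width along $x_1-x_3$ has fallen below $\Delta_{min}$; this places arm $1$ strictly above arm $3$ in empirical reward.

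The main step is to certify $\text{Advantage}(3)<0$ for $t>t_1$. Because $\text{Advantage}(h_t)=0$ by definition, whenever $\text{Advantage}(3)<0$ arm $3$ cannot be the Advantage-maximiser, so $l_*^t\neq 3$. With $h_t=1$, on $\mathcal{G}$ one has $\text{Advantage}(3)=\theta_t^T(x_3-x_1)+\beta_t\norm{x_3-x_1}_{V_t^{-1}}\leq -\Delta_{min}+2\beta_t\norm{x_3-x_1}_{V_t^{-1}}$, so it suffices to force $2\beta_t\norm{x_3-x_1}_{V_t^{-1}}<\Delta_{min}$. Substituting the explicit diagonal expression and the allocation of pulls between arms $1$ and $2$ induced by the sampling rule — which, by the convexity Lemma \ref{lemma:convexity of y'V-1y}, drives the two coordinate contributions $\frac{(1-\cos\omega)^2}{\lambda+n_1}$ and $\frac{\sin^2\omega}{\lambda+n_2}$ toward balance — turns this inequality into a lower bound on $t$. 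Splitting according to which coordinate contribution is binding produces the two branches of the maximum defining $t_1$, the branch featuring $\mu_3$ arising from the sub-case where the off-axis contribution tied to $\sin\omega$ dominates. For $t>t_1$ we then have $\text{Advantage}(3)<0$, and together with $h_*^t\neq 3$ this gives arm $3\notin\{h_*^t,l_*^t\}$.

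I expect the principal obstacle to be uniform control of the adaptive counts $n_1(t),n_2(t)$: the realised allocation is not the idealised minimiser of $\norm{x_3-x_1}_{V_t^{-1}}^2$, so one must lower-bound the relevant weighted combination of $n_1,n_2$ along every sample path (using the near-balancing guaranteed by the sampling rule together with Lemma \ref{lemma:convexity of y'V-1y}) while simultaneously propagating the estimation error $\theta_t-\theta^*$ through the gap terms, so that the pieces assemble into precisely the $\Delta_{min}$, $\sin\omega$ and $\mu_3$ factors of $t_1$.
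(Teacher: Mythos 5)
There is a genuine mismatch here, and it causes your main step to fail. The paper's own proof of this lemma does not establish $\text{Advantage}(3)<0$; it establishes $\text{Advantage}(2)<0$, i.e.\ that \emph{Arm 2} is discarded from $\{h_*^t,l_*^t\}$ by time $t_1$ (the statement's ``Arm 3'' is a typo: the sentence introducing the lemma announces ``the time taken by GLUCB to discard arm 2'', the proof ends with ``GLUCB discards Arm 2 from the set'', and Lemma \ref{lemma:3arm2ndlemma} explicitly takes the steady state to be $\{h_*^t,l_*^t\}\equiv\{1,3\}$, so arm 3 must \emph{remain} in that set). The two branches of the $\max$ in $t_1$ do not arise, as you suggest, from which coordinate contribution of $\norm{x_3-x_1}^2_{V_t^{-1}}$ is binding; they arise from the two sub-cases $h_*^t=1$ and $h_*^t=3$ when forcing $\text{Advantage}(2)<0$, with the respective gaps $\Delta_2=1$ and $\mu_3=\cos\omega$ --- which is exactly why $\mu_3$ appears in the second branch. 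Likewise the paper does not invoke the convexity Lemma \ref{lemma:convexity of y'V-1y} to control the counts; it derives an explicit ``self-correcting'' ratio $a\,n_2(t)\le n_1(t)\le b\,n_2(t)$ with $a\approx\Delta_{min}/\sin\omega$ directly from the closed-form sampling rule, and substitutes this into the diagonal expression for the confidence width.

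Quantitatively, your version cannot be rescued: to certify $\text{Advantage}(3)<0$ on the good event you correctly reduce to $2\beta_t\norm{x_3-x_1}_{V_t^{-1}}<\Delta_{min}$, i.e.\ $4\beta_t^2\bigl(\tfrac{(1-\cos\omega)^2}{\lambda+n_1}+\tfrac{\sin^2\omega}{\lambda+n_2}\bigr)<(1-\cos\omega)^2$, which forces $n_2\gtrsim 4\beta_t^2\sin^2\omega/\Delta_{min}^2$. For small $\omega$ this is of order $\beta_t^2/\omega^2$, whereas $t_1=O(\beta_t^2\sin\omega/\Delta_{min})=O(\beta_t^2/\omega)$; the threshold $t_1$ is far too small, and indeed $\beta_t^2\sin^2\omega/\Delta_{min}^2$ is precisely the leading term that Lemma \ref{lemma:3arm2ndlemma} and Theorem \ref{thm:3armsamplecomplexity} must add \emph{after} $t_1$. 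So the step ``for $t>t_1$ we then have $\text{Advantage}(3)<0$'' is false, and with it your conclusion. The correct target for this lemma is arm 2: with $h_*^t\in\{1,3\}$ the relevant gaps $\Delta_2=1$ and $\mu_3$ are $\Theta(1)$, and then the computation you outline does close at time $t_1$.
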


Finally we bound the number of samples needed by GLUCB to stop once the set $ \{h_*^t,l_*^t\} \equiv \{1,3\} $ has frozen.

\begin{lemma}\label{lemma:3arm2ndlemma}
The number of samples needed for G-LUCB to stop once in steady state is upper bounded by $ \frac{4{\beta_t}^2}{\Delta_{min}^2}{\sin^2(\omega)} + \frac{4\beta_t^2}{\Delta_{min}}\sin(\omega). $
\end{lemma}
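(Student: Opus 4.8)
The plan is to reduce the stopping event to a quantitative shrinkage condition on the confidence set and then count how many pulls the greedy rule needs to meet it. By Lemma~\ref{lemma:key3armcase} and Lemma~\ref{lemma:3arm1stlemma}, once GLUCB is in steady state we have $h_*^t=1$, $l_*^t=3$, and only the arms $x_1=e_1$ and $x_2=e_2$ are ever played; hence $V_t=\lambda I + n_1(t)\,e_1e_1^T + n_2(t)\,e_2e_2^T$ is diagonal and $\norm{x_1-x_3}^2_{V_t^{-1}}=\frac{(1-\cos\omega)^2}{\lambda+n_1(t)}+\frac{\sin^2\omega}{\lambda+n_2(t)}$. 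First I would rewrite the stopping rule $\text{Advantage}(3)<0$ in terms of this form. On the good event $\theta^*\in\mathcal{E}_t$ (which holds with probability at least $1-\delta$, as in the correctness theorem), $\text{Advantage}(3)=\theta_t^T(x_3-x_1)+\beta_t\norm{x_3-x_1}_{V_t^{-1}}\le -\Delta_{min}+2\beta_t\norm{x_1-x_3}_{V_t^{-1}}$, so it suffices to drive $\norm{x_1-x_3}^2_{V_t^{-1}}$ below $\Delta_{min}^2/(4\beta_t^2)$. Here I would record that, since $\Delta_2=1$ and $\Delta_3=1-\cos\omega$ with $0<\omega<\pi/2$, we have $\Delta_{min}=\Delta_3=1-\cos\omega$, so the first coordinate term is exactly $\Delta_{min}^2/(\lambda+n_1)$.

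The core is then to bound the number of additional pulls of arms $1$ and $2$ that the sampling rule needs to push the displayed quadratic form below $\Delta_{min}^2/(4\beta_t^2)$, which I would do coordinate by coordinate. The second term $\sin^2\omega/(\lambda+n_2)$ falls below the tolerance once $n_2\gtrsim \frac{4\beta_t^2}{\Delta_{min}^2}\sin^2\omega$, giving the dominant summand of the bound. The first term $\Delta_{min}^2/(\lambda+n_1)$ falls below the tolerance once $n_1\gtrsim 4\beta_t^2$, and I would convert this into the claimed form $\frac{4\beta_t^2}{\Delta_{min}}\sin\omega$ using the elementary inequality $1-\cos\omega\le\sin\omega$ (valid on $0<\omega\le\pi/2$), which shows $4\beta_t^2\le \frac{4\beta_t^2}{\Delta_{min}}\sin\omega$. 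Summing the two coordinate budgets produces exactly $\frac{4\beta_t^2}{\Delta_{min}^2}\sin^2\omega+\frac{4\beta_t^2}{\Delta_{min}}\sin\omega$, up to the choice of the numerical constant in the tolerance.

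To turn this allocation count into a statement about the \emph{adaptive} greedy run, I would invoke the convexity of $\lambda\mapsto y^TA(\lambda)^{-1}y$ (Lemma~\ref{lemma:convexity of y'V-1y}) with $y=x_1-x_3$. Since the GLUCB sampling rule myopically selects the arm that maximally decreases $\norm{x_1-x_3}_{(V_t+x_ax_a^T)^{-1}}$, convexity certifies that the greedy trajectory reaches any prescribed uncertainty level no later than the best static allocation achieving that level. Comparing the greedy run against the static split $(n_1,n_2)$ identified above then yields the stated upper bound on the steady-state stopping time.

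The main obstacle is precisely this last certification. The myopic rule reduces the two coordinate terms in an interleaved, data-dependent way (it spends its earliest pulls on arm $2$, where the $\sin^2\omega$ term dominates the marginal decrease, and only later alternates with arm $1$), so I must argue it never wastes a pull relative to the optimal static split while controlling the Sherman--Morrison ``$+1$'' in the denominators $\lambda+n_k(t)+1$ and the integer rounding. A secondary but necessary bookkeeping point is to subtract off the pulls already accrued before steady state, so that the quantity being bounded is genuinely the incremental number of samples rather than the total.
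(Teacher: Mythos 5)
Your reduction of the stopping rule to the condition $\norm{x_1-x_3}^2_{V_t^{-1}} = \frac{(1-\cos\omega)^2}{\lambda+n_1}+\frac{\sin^2\omega}{\lambda+n_2} < \frac{\Delta_{min}^2}{4\beta_t^2}$ matches the paper, but the step you yourself flag as ``the main obstacle'' is a genuine gap, and it is exactly where your route diverges from the paper's. Lemma~\ref{lemma:convexity of y'V-1y} says only that the static design objective $w \mapsto \norm{y}^2_{W^{-1}}$ is convex on the probability simplex; it does not certify that the myopic one-step-lookahead sampling rule reaches a prescribed uncertainty level no later than the best static allocation. No such greedy-versus-static dominance is proved anywhere in the paper, and greedy rank-one minimization of $\norm{y}^2_{V^{-1}}$ is not exactly optimal in general, so this comparison cannot simply be invoked. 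The paper sidesteps the issue entirely by tracking the greedy trajectory itself: in steady state the sampling criterion compares $\frac{1-\cos\omega}{\sqrt{(\lambda+n_1)(\lambda+n_1+1)}}$ with $\frac{\sin\omega}{\sqrt{(\lambda+n_2)(\lambda+n_2+1)}}$, which forces the pull counts to oscillate around the ratio $n_1 \approx n_2\,\Delta_{min}/\sin\omega$ (the ``self-correcting'' property). Substituting this ratio collapses the two-term quadratic form into the single fraction $\frac{\Delta_{min}\sin\omega+\sin^2\omega}{N_2}$, from which the stated bound on $N_2$ drops out in one line. Note also that this ratio, $n_1/n_2 \approx \Delta_{min}/\sin\omega$, is \emph{not} the one your static split uses ($n_1/n_2 \approx \Delta_{min}^2/\sin^2\omega$), so even if a greedy-to-static certificate were available, it would compare the algorithm against a different allocation from the one it actually follows.

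A secondary issue: splitting the tolerance coordinate-by-coordinate, with each of the two terms required to fall below the full budget $\Delta_{min}^2/(4\beta_t^2)$, only guarantees that their sum is below twice the tolerance; to actually trigger the stopping rule you must allocate half the budget to each term, which doubles both counts and loses the specific constants claimed in the lemma. The paper's common-denominator computation avoids this because both terms are controlled simultaneously by the single count $N_2$. Your observation that $\Delta_{min}=1-\cos\omega\le\sin\omega$ on $(0,\pi/2)$, used to convert $4\beta_t^2$ into $\frac{4\beta_t^2}{\Delta_{min}}\sin\omega$, is correct and is implicitly the same fact the paper exploits, but it does not repair the missing certification of the adaptive trajectory.
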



\begin{remark}
The term inside the max in the theorem statement, is small and can be absorbed into the leading terms. Hence, the sample complexity of GLUCB can be written as $O\left( \frac{{\beta_t}^2}{\Delta_{min}^2}{\sin^2(\omega)} + \frac{\beta_t^2}{\Delta_{min}}\sin(\omega) \right)$.
\end{remark}
\begin{remark}
	A crucial observation here is that the geometry of the problem enters the sample complexity (in terms of $\sin\omega$), which. since $\omega$ is small, reduces the sample complexity compared to that of a standard MAB algorithm running on the instance.
\end{remark}
We will now see that this is indeed the optimal strategy.

\subsubsection{Lower-bound for the three-arm case}
%

By \cite[Theorem 1]{jamieson-etal19transductive-linear-bandits}, the expected sample complexity of any PAC best arm identification algorithm for LMAB is lower bounded as:
\[\frac{\expect{\tau}}{\log\frac{1}{2.4\delta}} \geq{\min\limits_{w\in \mathcal{P}_3}\max\limits_{a'\neq1} \frac{\norm{x_{a'}-x_1}^2_{W^{-1}}}{\Delta_{a'}^2}}\geq \min\limits_{w\in \mathcal{P}_3}\frac{\norm{x_3-x_1}_{W^{-1}}^2}{\Delta_{min}^2},\]
where $W:=\sum\limits_{a=1}^{3}w_ax_ax_a^T$. 
By solving the above optimization problem, %
we have
\begin{theorem}
	For $K=3$, and $\norm{x_k}_2=1, k=1,2,3$, the  expected sample complexity is lower bounded as
	$\frac{\expect{\tau}}{\log\frac{1}{2.4\delta}} \geq \Bigg(1+\frac{2\sin(\omega)}{\Delta_{min}} +\frac{\sin^2(\omega)}{\Delta_{min}^2} \Bigg). $
\end{theorem}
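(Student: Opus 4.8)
The plan is to evaluate the inner minimisation $\min_{w\in\mathcal{P}_3}\norm{x_3-x_1}^2_{W^{-1}}$ explicitly and divide by $\Delta_{min}^2$; the chain of inequalities preceding the statement has already reduced the theorem to this deterministic optimisation. Writing $c:=x_3-x_1=(\cos\omega-1,\sin\omega)^T$ and recording that $\Delta_2={\theta^*}^T(x_1-x_2)=1$ and $\Delta_3={\theta^*}^T(x_1-x_3)=1-\cos\omega=\Delta_{min}$ for $0<\omega<\pi/2$, I would first note that by Lemma~\ref{lemma:convexity of y'V-1y} the map $w\mapsto c^TW^{-1}c$ is convex on $\mathcal{P}_3$, so that any point satisfying the first-order (KKT) conditions is automatically a global minimiser.

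Next I would produce a candidate optimiser on the face $\{w_3=0\}$. There $W=\mathrm{diag}(w_1,w_2)$, so $c^TW^{-1}c=\tfrac{(1-\cos\omega)^2}{w_1}+\tfrac{\sin^2\omega}{w_2}$, and minimising this over $w_1+w_2=1$ (Cauchy--Schwarz) yields the minimiser $w^\star=\tfrac{1}{\Delta_{min}+\sin\omega}(\Delta_{min},\sin\omega,0)$ with value $(\Delta_{min}+\sin\omega)^2$.

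The crux --- and the step I expect to be the main obstacle --- is to upgrade this \emph{face} computation to a statement about the whole simplex, i.e.\ to show that giving arm $3$ positive weight cannot lower the objective. Restricting to $w_3=0$ only furnishes an upper bound on the minimum, which is the wrong direction for a lower bound; one genuinely has to certify that $w^\star$ is globally optimal. I would do this through the KKT conditions, using $\partial_{w_i}(c^TW^{-1}c)=-(x_i^TW^{-1}c)^2$. A short calculation at $w^\star$ gives $W^{-1}c=(-D,D)^T$ with $D:=\Delta_{min}+\sin\omega$, whence $(x_1^TW^{-1}c)^2=(x_2^TW^{-1}c)^2=D^2$ (the stationarity condition on the active arms $1,2$), while $(x_3^TW^{-1}c)^2=D^2(\sin\omega-\cos\omega)^2$. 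The inactive-arm condition $(x_3^TW^{-1}c)^2\le D^2$ then reduces to $(\sin\omega-\cos\omega)^2\le1$, i.e.\ to $\sin(2\omega)\ge0$, which holds precisely because $0<\omega<\pi/2$. This is where the geometry of the instance (and the standing assumption on $\omega$) is essential; equivalently, in Elfving's picture the ray through $c$ meets the boundary of $\mathrm{conv}\{\pm x_1,\pm x_2,\pm x_3\}$ on the facet joining $x_2$ and $-x_1$, so the $c$-optimal design is supported on arms $1$ and $2$ only.

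Finally, combining the convexity-plus-KKT certification with the face value gives $\min_{w\in\mathcal{P}_3}\norm{x_3-x_1}^2_{W^{-1}}=(\Delta_{min}+\sin\omega)^2$; dividing by $\Delta_{min}^2$ yields $\big(1+\tfrac{\sin\omega}{\Delta_{min}}\big)^2=1+\tfrac{2\sin\omega}{\Delta_{min}}+\tfrac{\sin^2\omega}{\Delta_{min}^2}$, which is exactly the asserted lower bound.
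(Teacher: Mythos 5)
Your proof is correct, and it certifies the key step by a different mechanism than the paper. Both arguments start from the reduction (already stated before the theorem) to the deterministic program $\min_{w\in\mathcal{P}_3}\norm{x_3-x_1}^2_{W^{-1}}/\Delta_{min}^2$, and both end with the same Cauchy--Schwarz computation on the face $\{w_3=0\}$ giving the value $(\Delta_{min}+\sin\omega)^2$. Where you differ is in how you rule out putting mass on arm $3$: the paper's own calculation writes the objective explicitly via the $2\times 2$ adjugate, obtaining $\bigl(w_2(1-\cos\omega)^2+(1-w_2)\sin^2\omega\bigr)/\det W$, and argues that this quantity is increasing in $w_3$, so the minimiser must lie on the face $w_3=0$; you instead exhibit the candidate $w^\star\propto(\Delta_{min},\sin\omega,0)$ and certify its global optimality through convexity (the paper's Lemma~\ref{lemma:convexity of y'V-1y}) plus the KKT conditions, with the inactive-arm test collapsing to $(\sin\omega-\cos\omega)^2\le 1$, i.e.\ $\sin(2\omega)\ge 0$. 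You are right that the face computation alone only gives an upper bound and that the certification is the genuine content; your gradient formula $\partial_{w_i}(c^TW^{-1}c)=-(x_i^TW^{-1}c)^2$ and the evaluation $W^{-1}c=(-D,D)^T$ are both correct (note $W\succ 0$ at $w^\star$ since $x_1,x_2$ span $\mathbb{R}^2$, so the KKT machinery applies). Your route is arguably cleaner: it is a local certificate that makes the role of the hypothesis $0<\omega<\pi/2$ completely transparent and coincides with the standard $c$-optimal-design/Elfving characterisation, hence would extend to other arm geometries; the paper's monotonicity argument yields the slightly stronger structural fact that moving weight onto arm $3$ hurts from \emph{every} starting design, at the cost of a more opaque case analysis.
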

\section{Experiments}\label{Experiments}
In this section, we compare the performance of GLUCB with XY-static \cite{soare}, LUCB \cite{Kalyanakrishnan2012PACSS}, LinGapE \cite{XuAISTATS}  and X-ElimTilp with $p=0$ \cite{Tao_et-al-ICML-2018} through experiments in three synthetic settings and simulations based on real data. For LinGapE we implement the version of the algorithm which has been analyzed in their paper. For implementation of X-ElimTil0 with the setting as mentioned in \cite{Tao_et-al-ICML-2018}.
\subsection{Experiments based on synthetic data}
Throughout we assume noise $\epsilon_t\sim \mathcal{N}(0,1)$ independent. The results reported are averaged over 100 trials under each setting. We report the average number of samples to stop in each case, for each algorithm. The empirical probability of error in each case was found to be 0.
\begin{enumerate}
	\item Dataset 1: This is the setting introduced by \cite{soare} for linear bandits. We set up the linear bandit problem with $d+1$ arms, where features are the canonical bases $\{e_1,e_2,\ldots,e_d\}$ and an additional arm $x_{d+1}=(\cos(0.1),\sin(0.1),0,\ldots,0)$ with $\theta^*=(1,0,\ldots,0)$ so that the first arm is the best arm, with the $d+1$th arm being the most ambigous arm. We test by varying $d=2,\dots,10$. With $d=2$, this setup resembles the case we analyzed in \ref{upperbound}.
	\item Dataset 2: In this dataset, $K=100$ feature vectors  are sampled uniformly at random the surface of the unit sphere $\mathbb{S}^{d-1}$ centered at the origin. We pick the two closest arms, say  $u$ and $v$, and then set $\theta^*=u+\gamma(v-u)$ for $\gamma=0.01$. This makes $u$ as the best arm. We test the algorithm for $d=10,20,\ldots,100$.
	\item Dataset 3: This setup is important as this shows the efficiency of GLUCB in the case when there may be many arms which competing for the second best arm. For a given value of $K\geq3$, the armset contains feature vectors from $\mathbb{R}^2$ where $\mathcal{X}\equiv\{e_1, \cos(3\pi/4)e_1+\sin(3\pi/4)e_2 \}\cup\{\cos(\pi/4+\phi_k)e_1+\sin(\pi/4+\phi_k)e_2\}_{k=3}^{K}$ where $\phi_k\sim \mathcal{N}(0,0.09)$. $\theta^*$ was fixed to be $e_1$. We conduct the experiment by varying $K=\{10000,15000,\ldots,25000\}$.
\end{enumerate}
\subsection{Experiments based on real data}
We conduct an experiment on Yahoo! Webscope dataset R6A\footnote{\hyperbaseurl{https://webscope.sandbox.yahoo.com/}https://webscope.sandbox.yahoo.com/} which consists of features of 36-dimensions accompanied with binary outcomes. We  change the situation as is done in \cite{XuAISTATS} so that it can be adopted for best arm identification setting. We construct the 36-dimensional feature set $\mathcal{X}$ by the random sampling from the dataset, and the reward is generated as $y_t = 1$ with probability $(1+x_{a_t}^T\theta^*)/2$ and $-1$ with probability $(1-x_{a_t}^T\theta^*)/2$, %
where $ \theta^* $ is the regularized least squared estimator fitted for the original dataset. We choose the vectors such that $0<(1+{x_{a_t}}^T{\theta^*})/2<1$. For the detailed procedure, we refer the reader to the paper of \citet{XuAISTATS}.
\begin{table}[h]
	\parbox{.55\linewidth}{
  \caption{Synthetic dataset 1 $ \omega = 0.1 $, 100 trials, $\epsilon=0$}
  \label{Synthetic dataset 1 }
  \centering
  \begin{tabular}{lllll}
    \toprule
    $d$     & G-LUCB     & LinGapE   & X-ElimTil_{0} & LUCB \\
    \midrule
    2 & \textbf{12192}  & 13614 & 43680 & 2323124     \\
    3 & \textbf{15747}  & 16246 & 65520 & 2477473     \\
    4 & \textbf{18076}  & 19963 & 87360 & 2508303     \\
    5 & {22798}  & 21092 & 109200 & 2594715     \\
    6 & \textbf{24136}   & 24136 & 131040 & 2520462     \\
    7 & \textbf{25342}   & 27283 & 152880 & 3083729     \\
    8 & {29123}  & 28623 & 174720 & 3135822     \\
    9 & {32039}  & 31395 & 196560 & 3228290     \\
    10 & \textbf{33668}   & 34325  & 218400 & 3070843     \\
    \bottomrule
  \end{tabular}
}
\quad\quad
\parbox{0.43\linewidth}{
\begin{minipage}{0.9\linewidth}
			\centering
			\caption{Synthetic dataset 2,  100 trials, $\Delta_{min}>0.05$}
\label{Synthetic dataset 2}
			\includegraphics[scale=0.4]{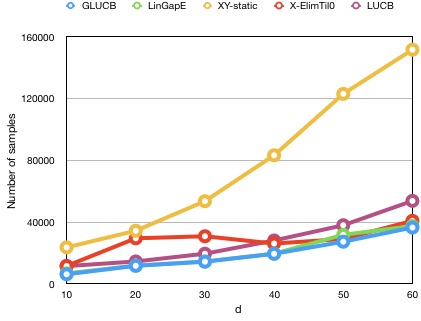}

		\end{minipage}
		}
\end{table}

	\begin{table}[htbp]
		\begin{minipage}{0.4\linewidth}
			\caption{Synthetic dataset 3, 100 trials}
			\label{Synthetic dataset 3}
			\centering
			\begin{tabular}{llll}
				\toprule
				$K$     & GLUCB    & LinGapE & X-ElimTil0\\
				\midrule
				15000 & \textbf{856}  & 860  &  5477 \\
				20000 & \textbf{895}  & 1112 &     6103\\
				25000 & {953}  & 867  & 7790  \\
				30000 & \textbf{861}  & 995  & 8054 \\
				\bottomrule
			\end{tabular}
		\end{minipage}\hfill
		\begin{minipage}{0.7\linewidth}
			\centering
			\includegraphics[scale=0.15]{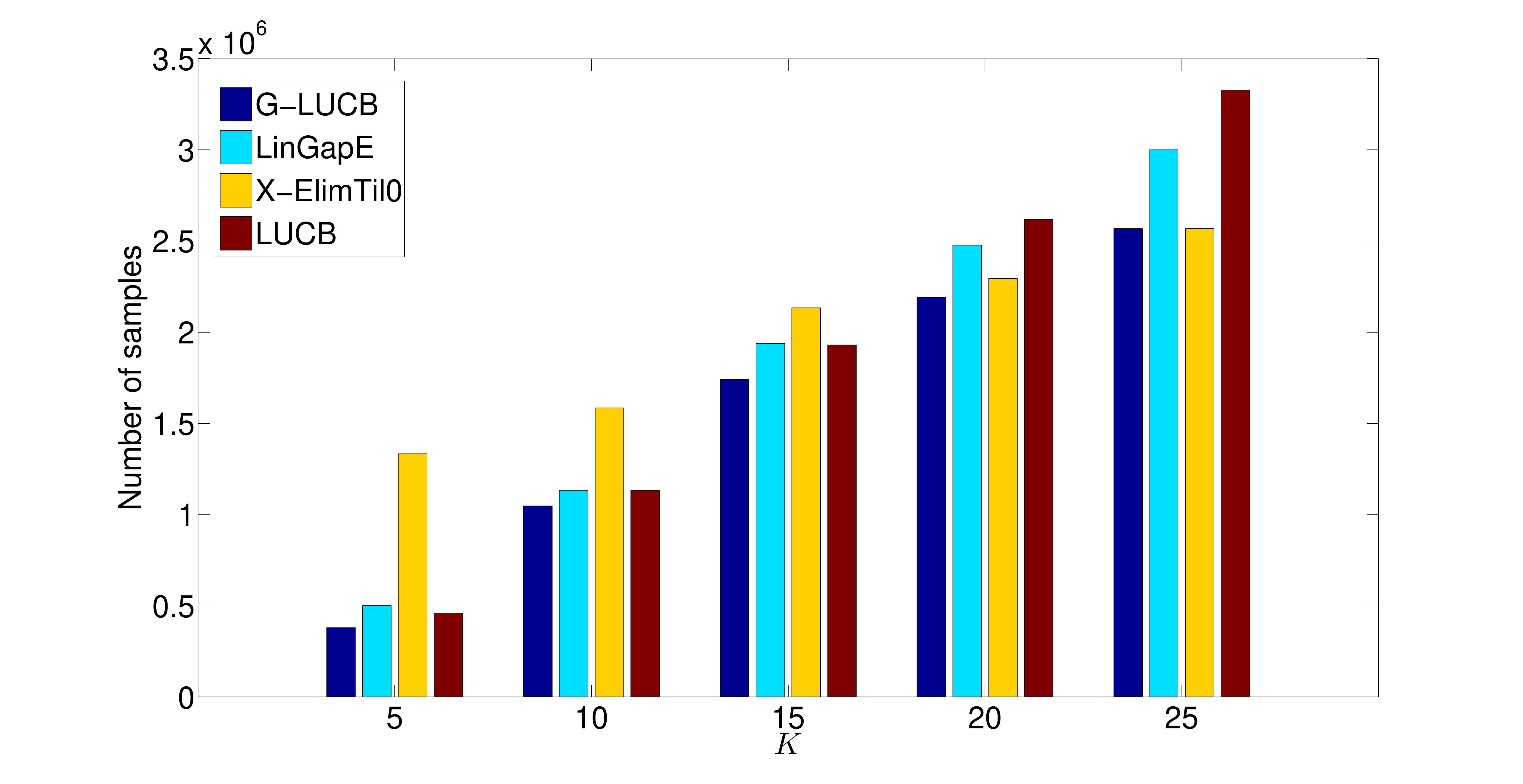}
\caption{Experiment on the Yahoo! dataset, 50 trials}
	\label{fig:AISTATS Yahoo!_bar}
		\end{minipage}
	\end{table}


%





\section{Conclusion and future work}

We have generalised the LUCB best arm identification algorithm to bandits with linear structure via a new MaxOverlap rule to reason under uncertainty. The resulting GLUCB algorithm is computationally very attractive as compared to many state-of-the-art algorithms for linear bandits. In particular, it does not require solving optimisation problems which are  inefficient when $K$ is large. Viewed from another perspective, the algorithm leverages the fact that a strategy which tries to greedily maximize the gap between the current best and second best arms is optimal for the BAI problem. We show that for the special case of two arms GLUCB is better than any causal algorithm for BAI problem. We also show orderwise optimality in the case of three arms. 
\par In light of the analysis presented and the performance of GLUCB in our experiments, we conjecture that our algorithm is optimal for any set of $K$ arms. Proving this forms part of our future work. We conduct several experiments based on synthetically designed environments and real world dataset and show the superior performance of GLUCB over other algorithms. Furthermore, we  believe that the factor of $d$ in the sample complexity results is due to  the general concentration bound for adaptive sequences and can be improved, which also remains as a future work. More generally, it is interesting to ask if the general max-overlap principle works for other, non-linear bandit reward structures as well. 
\bibliography{main}
\bibliographystyle{plainnat}

\newpage
\section{Appendix}

\subsection{Proof of Prop.~\ref{Prop:MaxOverlapArm}}\label{sec:appendixProofOfMaxOverlapArm}
As mentioned in Sec.~\ref{sec:TheGlucbAlgorithm}, at time $t+1$, our algorithm samples arm  
   \begin{align*}
   a_{t+1}&:= \argmin\limits_{a=1}^K MaxOverlap\Bigg(\mathcal{E}(\theta_t, V_t+x_ax_a^T); R_{h_t}^c\Bigg)
   \\
   &= \argmin\limits_{a=1}^K \max\limits_{\substack{\theta\in\mathcal{E} (\theta_t,V_t+ x_ax_a^T),\\ \theta\in R^c_{h_t}}}  \min\limits_{\tilde{\theta}\in \partial(R_{h_t}^c)} \norm{\theta-\tilde{\theta}}_2\\
   &= \argmin\limits_{a=1}^K \max\limits_{\substack{\theta\in\mathcal{E} (\theta_t,V_t+ x_ax_a^T),\\ \theta\in \bigcup\limits_{i \neq h_t} \{\theta':{\theta'}^Tx_i>{\theta'}^Tx_{h_t}\} }}  \min\limits_{\tilde{\theta}\in \bigcup\limits_{i \neq h_t} \{\theta':{\theta'}^Tx_i={\theta'}^Tx_{h_t}\}} \norm{\theta-\tilde{\theta}}_2\\
   &= \argmin\limits_{a=1}^K \max_{i\neq h_t}\max\limits_{\substack{\theta\in\mathcal{E} (\theta_t,V_t+ x_ax_a^T),\\ \theta^Tx_i>\theta^Tx_{h_t}}}  \min\limits_{\tilde{\theta}: \tilde{\theta}^Tx_i=\tilde{\theta}^Tx_{h_t}} \norm{\theta-\tilde{\theta}}_2
   \end{align*}
Where we have used the fact that $R_{h_t}^c=\sqcup_{i \neq h_t}R_i\equiv \bigcup\limits_{i\neq h_t}\Bigg\{ \tilde{\theta}\in \mathbb{R}^d: \tilde{\theta}^Tx_i>\tilde{\theta}^Tx_{h_t} \Bigg\}.$
To get a closed form solution for the above, we solve the optimization problem : $\max\limits_{i\neq h_t} \min\limits_{\tilde{\theta}:\tilde{\theta}^Tx_i > \tilde{\theta}^Tx_{h_t}}\norm{\theta-\tilde{\theta}}_2^2$
explicitly to obtain:
   \[a_{t+1}= \argmin\limits_{a=1}^{K} \max\limits_{i\neq h_t} \max\limits_{\theta \in \mathcal{E}(\theta_t, V_t+x_ax_a^T)} ({\theta^T(x_i-x_{h_t})})^+ \] 
The agent must stop choosing arms if this value is zero, i.e., when $\mathcal{C}_t$ no longer intersects any of the suboptimal cones.   
Due to the $max$ over $K-1$ arms the above strategy is inefficient to implement. A slight modification of the above which is easier to implement, is as follows. 
recalling the definition of $\text{HalfSpace}$, define the cone ($R(l_t)$) which has the maximum overlap with the \textit{current} cone.
\begin{align*}
l_t &:=\argmax\limits_{i\neq h_t, i\in [K]} MaxOverlap\Bigg(\mathcal{E}(\theta_t, V_t); \text{HalfSpace}(i,h_t) \Bigg) \\
&\equiv \argmax_{i\neq h_t}\max\limits_{\substack{\theta\in\mathcal{E} (\theta_t,V_t),\\ \theta^Tx_i>\theta^Tx_{h_t}}}  \min\limits_{\tilde{\theta}: \tilde{\theta}^Tx_i=\tilde{\theta}^Tx_{h_t}} \norm{\theta-\tilde{\theta}}_2\\
&\equiv \argmax\limits_{i\neq h_t} \max\limits_{\theta \in \mathcal{E}(\theta_t, V_t)}\theta^T(x_i-x_{h_t})\\
&=\argmax\limits_{i\neq h_t}\Bigg(\theta_t^T(x_i-x_{h_t}) + \beta_t\norm{x_i-x_{h_t}}_{V_t^{-1}} \Bigg),
\end{align*}
which is straightforward to implement on a computer.
Then play an arm according to the following rule. If $\max\limits_{\theta \in \mathcal{E}(\theta_t, V_t)}\theta^T(x_{l_t}-x_{h_t}) > 0,$ play:
\allowdisplaybreaks
\begin{align*}
a_{t+1} :=& \argmin\limits_{a \in [K]} MaxOverlap\Bigg(\mathcal{E}(\theta_t, V_t+x_ax_a^T); \text{HalfSpace}(l_t,h_t) \Bigg) \\
\equiv&\argmin\limits_{a=1}^{K} \max\limits_{\substack{\theta \in \mathcal{E}(\theta_t, V_t+x_ax_a^T)\\\theta^Tx_{l_t}>\theta^Tx_{h_t}}} \min\limits_{\substack{\tilde{\theta}\in \mathbb{R}^d:\\ \tilde{\theta}^Tx_{l_t}=\tilde{\theta}^Tx_{h_t}}} \norm{\theta-\tilde{\theta}}_2\\
=& \argmin\limits_{a=1}^{K} \max\limits_{\theta \in \mathcal{E}(\theta_t, V_t+x_ax_a^T)} \theta^T(x_{l_t}-x_{h_t})\\
=& \argmin\limits_{a=1}^{K}  \Bigg[ \theta_t^T(x_{l_t}-x_{h_t}) +\beta_t\norm{x_{l_t}-x_{h_t}}_{(V_t+x_ax_a^T)^{-1}}\Bigg] \\
=& \argmin\limits_{a=1}^{K}  \norm{x_{l_t}-x_{h_t}}_{(V_t+x_ax_a^T)^{-1}} \\
\equiv& \argmax\limits_{a=1}^K \frac{\abs{x_a^TV_t^{-1}(x_{l_t}-x_{h_t})}}{\sqrt{1+ \norm{x_a}^2_{V_t^{-1}}}}.
\end{align*}
 The last step uses the Matrix Inversion Lemma \cite{Horn:2012:MA:2422911}.
\subsection{Lower Bound on Sample Complexity}\label{proof of lower bound}

We begin by restating the result of [\cite{KaufmanOpt}] in the special case of linear bandits. Let $\{x_1,x_2,\ldots,x_K\}$ be a given set of arms  in $\mathbb{R}^d$. Let $\theta$ be any vector in $\mathbb{R}^d$.  For any arbitrary vector $v\in \mathbb{R}^d$, we define $a^*(v):=\argmax\limits_{x_a\in \mathcal{X}}v^Tx_a$. Define the set $Alt(\theta) := \{\xi \in \mathbb{R}^d: a^*(\xi)\neq a^*(\theta) \}$. 
  \begin{lemma}[General change-of-measure based lower bound of \citet{KaufmanOpt}]\label{Kaufmann}
  Let $\delta \in (0,1).$ For any $\delta-PAC$ strategy and any linear bandit with the unknown parameter vector $\theta \in \mathbb{R}^d$. Let the noise be normal with a variance parameter of $1/2$. Then the expected sample complexity of any strategy
  \beqn
  \mathbb{E}_{\theta}[\tau_{\delta}] \geq T^*(\theta) kl(\delta,1-\delta)
  \eeqn
  where, $kl(x,y) = x\log\frac{x}{y}+ (1-x)\log\frac{1-x}{1-y}$ for $x,y\in [0,1]$ and
  \[T^*(\theta)^{-1} := \sup\limits_{\underline{w}\in \mathcal{P}_K}\inf\limits_{\xi\in Alt(\theta)}\sum\limits_{a=1}^{K}w_aKL(\mathcal{N}(\theta^Tx_a,1/2),\mathcal{N}(\xi^Tx_a,1/2)),\]
 where $\mathcal{P}_K$ is defined as the set of all probability distributions on $\mathcal{X}$ and $KL(P,Q)$ is the KL-divergence between any two probability distributions $P$ and $Q$, and $\mathcal{N}\left(\mu,\sigma^2\right)$ is normal distribution with mean $\mu$ and variance $\sigma^2$.
  \end{lemma}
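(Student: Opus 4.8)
The plan is to recognize this statement as the linear-bandit specialization of the generic change-of-measure lower bound of \citet{KaufmanOpt}, so the proof is an adaptation of their argument rather than a fresh derivation. The central tool is a transportation (change-of-measure) inequality that lower-bounds, for any two bandit instances $\theta$ and an alternative $\xi$, a weighted sum of the KL-divergences between arm-reward laws by the $kl$-divergence of the probabilities assigned to any observable event. First I would fix notation: let $N_a(\tau)$ denote the (random) number of pulls of arm $a$ up to the stopping time $\tau$, and let $\nu_a^{\theta}=\mathcal{N}(\theta^Tx_a,1/2)$, $\nu_a^{\xi}=\mathcal{N}(\xi^Tx_a,1/2)$ be the reward distributions of arm $a$ under the two instances.

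The key step is the inequality
\[
\sum_{a=1}^{K}\mathbb{E}_{\theta}[N_a(\tau)]\,KL\big(\nu_a^{\theta},\nu_a^{\xi}\big)\;\geq\; kl\big(\mathbb{P}_{\theta}(\mathcal{A}),\,\mathbb{P}_{\xi}(\mathcal{A})\big),
\]
valid for every event $\mathcal{A}$ measurable with respect to the history at $\tau$. I would establish this by writing the log-likelihood ratio of the two instances along a trajectory, applying Wald's identity in the form that accounts for the adaptive sampling (the increments decompose arm-by-arm because the noise is independent and each arm's contribution depends on the instance only through $\theta^Tx_a$), and then invoking the data-processing inequality for KL on the two-point partition induced by $\mathcal{A}$. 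This is exactly where the linear structure enters: the reward law of arm $a$ depends on the unknown vector only through the scalar projection $\theta^Tx_a$, so for Gaussians with variance $1/2$ the divergence collapses to $KL(\nu_a^{\theta},\nu_a^{\xi})=(\theta^Tx_a-\xi^Tx_a)^2$.

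Next I would select $\mathcal{A}=\{a^*(\hat\theta_\tau)=a^*(\theta)\}$, the event that the recommended arm is the optimal arm under $\theta$. For any $\xi\in Alt(\theta)$ the optimal arm differs, so $\delta$-PAC correctness gives $\mathbb{P}_{\theta}(\mathcal{A})\geq 1-\delta$ and $\mathbb{P}_{\xi}(\mathcal{A})\leq\delta$; by the monotonicity of $kl$ in each argument this yields $kl(\mathbb{P}_{\theta}(\mathcal{A}),\mathbb{P}_{\xi}(\mathcal{A}))\geq kl(1-\delta,\delta)=kl(\delta,1-\delta)$. Substituting the Gaussian divergences and writing $\mathbb{E}_{\theta}[N_a(\tau)]=\mathbb{E}_{\theta}[\tau]\,w_a$, where $w_a:=\mathbb{E}_{\theta}[N_a(\tau)]/\mathbb{E}_{\theta}[\tau]$ defines an element of $\mathcal{P}_K$, gives
\[
\mathbb{E}_{\theta}[\tau]\sum_{a=1}^{K}w_a\,KL\big(\nu_a^{\theta},\nu_a^{\xi}\big)\;\geq\;kl(\delta,1-\delta)
\]
for every $\xi\in Alt(\theta)$. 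Taking the infimum over $\xi$ and then bounding the algorithm-induced $w$ by the supremum over all of $\mathcal{P}_K$ produces $\mathbb{E}_{\theta}[\tau]\,T^*(\theta)^{-1}\geq kl(\delta,1-\delta)$, which is the claim.

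I expect the main obstacle to be a fully rigorous justification of the transportation inequality under the random stopping time, rather than for a fixed horizon: one must ensure Wald's identity applies to the log-likelihood ratio martingale at the (possibly unbounded) stopping time $\tau$, which requires an integrability/uniform-integrability argument and the assumption $\mathbb{E}_{\theta}[\tau]<\infty$ (otherwise the bound is vacuously true). The remaining pieces—the Gaussian KL computation, the monotonicity of $kl$, and the $\sup_w\inf_\xi$ normalization—are routine and follow directly once the core inequality is in place.
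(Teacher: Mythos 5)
Your proposal is correct and is exactly the standard argument behind this result. Note that the paper itself offers no proof of this lemma --- it is imported verbatim from the cited reference (Garivier--Kaufmann) and only the downstream theorem, which evaluates $T^*(\theta)^{-1}$ via a Lagrangian computation, is proved in the appendix. Your reconstruction via the transportation inequality $\sum_a \mathbb{E}_{\theta}[N_a(\tau)]\,KL(\nu_a^{\theta},\nu_a^{\xi}) \geq kl(\mathbb{P}_{\theta}(\mathcal{A}),\mathbb{P}_{\xi}(\mathcal{A}))$, the choice of $\mathcal{A}$ as the correct-recommendation event, the symmetry $kl(1-\delta,\delta)=kl(\delta,1-\delta)$, and the normalization $w_a=\mathbb{E}_{\theta}[N_a(\tau)]/\mathbb{E}_{\theta}[\tau]$ followed by $\inf_{\xi}$ and $\sup_{w}$ is precisely how the cited source establishes it, including your caveat about integrability at the stopping time.
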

  
\begin{theorem*}[Lower Bound]
Let $\delta \in (0,0.15).$ For any $\delta-PAC$ strategy and any linear bandit problem with the unknown parameter vector $\theta \in \mathbb{R}^d$, 
\beqn
\expect{\tau_{\delta}} \geq \min\limits_{\underline{w}\in \mathcal{P}_K}\max\limits_{a\neq a^*}\frac{\norm{x_a-x_1}^2_{W^{-1}}}{\Delta_{a}^2}\log\frac{1}{2.4\delta}
\eeqn
where the expectation is under $\theta$ and $W:=\sum\limits_{a=1}^{K}w_ax_ax_a^T$, where $\{x_1,x_2,\ldots,x_K\}$ are the arms in $\mathbb{R}^d.$
\end{theorem*}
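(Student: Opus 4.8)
The plan is to specialize the general change-of-measure bound of Lemma~\ref{Kaufmann} to the linear-Gaussian setting and then reduce the characteristic time $T^*(\theta)$ to the advertised min-max form. First I would evaluate the per-arm divergence. Since all observations are Gaussian with common variance $1/2$, the identity $KL(\mathcal{N}(\mu_1,1/2),\mathcal{N}(\mu_2,1/2))=(\mu_1-\mu_2)^2$ gives $KL(\mathcal{N}(\theta^Tx_a,1/2),\mathcal{N}(\xi^Tx_a,1/2))=\big((\theta-\xi)^Tx_a\big)^2$. Writing $W:=\sum_{a}w_ax_ax_a^T$, the weighted sum therefore collapses into a single quadratic form,
\[
\sum_{a=1}^{K}w_a\,KL\big(\mathcal{N}(\theta^Tx_a,1/2),\mathcal{N}(\xi^Tx_a,1/2)\big)=(\theta-\xi)^TW(\theta-\xi)=\norm{\theta-\xi}_W^2,
\]
so that $T^*(\theta)^{-1}=\sup_{\underline{w}\in\mathcal{P}_K}\inf_{\xi\in Alt(\theta)}\norm{\theta-\xi}_W^2$.

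Next I would unpack the alternative set. Taking arm $1$ to be optimal without loss of generality, $Alt(\theta)=\bigcup_{a\neq 1}\{\xi:(x_a-x_1)^T\xi\geq 0\}$ is a union of half-spaces, and since the objective is continuous the infimum over the union is the minimum of the infima over each piece:
\[
\inf_{\xi\in Alt(\theta)}\norm{\theta-\xi}_W^2=\min_{a\neq 1}\;\inf_{\xi:\,(x_a-x_1)^T\xi\geq 0}\norm{\theta-\xi}_W^2.
\]
Each inner problem is the squared $W$-distance from $\theta$ to a half-space. Because arm $1$ is optimal we have $(x_a-x_1)^T\theta=-\Delta_a<0$, so $\theta$ violates the constraint and the minimiser sits on the bounding hyperplane $\{(x_a-x_1)^T\xi=0\}$; solving this equality-constrained quadratic (Lagrange multipliers, i.e.\ projection in the $W$-geometry) yields the standard point-to-hyperplane formula
\[
\inf_{(x_a-x_1)^T\xi\geq 0}\norm{\theta-\xi}_W^2=\frac{\big((x_a-x_1)^T\theta\big)^2}{\norm{x_a-x_1}_{W^{-1}}^2}=\frac{\Delta_a^2}{\norm{x_a-x_1}_{W^{-1}}^2}.
\]

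Assembling these pieces gives $T^*(\theta)^{-1}=\sup_{\underline{w}}\min_{a\neq1}\Delta_a^2/\norm{x_a-x_1}_{W^{-1}}^2$, equivalently $T^*(\theta)=\min_{\underline{w}}\max_{a\neq1}\norm{x_a-x_1}_{W^{-1}}^2/\Delta_a^2$. It then remains to lower-bound the $kl$ factor: a direct computation gives $kl(\delta,1-\delta)=(1-2\delta)\log\frac{1-\delta}{\delta}$, and for $\delta\in(0,0.15)$ this exceeds $\log\frac{1}{2.4\delta}$ (the standard inequality used in \cite{Kaufman16}), so multiplying the two bounds yields the claim. I expect the middle step to be the main obstacle: one must correctly interchange the infimum with the finite union of half-spaces, verify that the constrained minimiser in the $W$-norm produces exactly $\Delta_a^2/\norm{x_a-x_1}_{W^{-1}}^2$, and handle the boundary subtlety that $Alt(\theta)$ is a union of (essentially open) half-spaces while the infimum is attained on their closure --- the strict inequality $(x_a-x_1)^T\theta<0$ is precisely what guarantees the constraint is active and the resulting distance is strictly positive.
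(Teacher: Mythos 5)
Your proposal is correct and follows essentially the same route as the paper: specialize the Kaufmann-style change-of-measure bound to Gaussian observations so the weighted KL sum collapses to $\norm{\theta-\xi}_W^2$, decompose $Alt(\theta)$ into half-spaces indexed by suboptimal arms, and solve each constrained quadratic to obtain $\Delta_a^2/\norm{x_a-x_1}_{W^{-1}}^2$. The only cosmetic difference is that the paper perturbs the constraint to $\xi^Tx_{a'}\geq\xi^Tx_1+\epsilon$ and runs the Lagrangian explicitly before letting $\epsilon\to 0$, whereas you invoke the $W$-geometry point-to-hyperplane formula directly and handle the open/closed boundary issue by continuity --- the same computation in different clothing.
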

\begin{proof}{}
Recall the definition from Lemma \ref{Kaufmann}, 
\beqn
T^*(\theta)^{-1} = \sup\limits_{\underline{w}\in \mathcal{P}_K}\inf\limits_{\xi\in Alt(\theta)}\sum\limits_{a=1}^{K}w_ad(\mathcal{N}(\theta^Tx_a,1/2),\mathcal{N}(\xi^Tx_a,1/2)),
\eeqn
where $d(.,.)$ is the KL divergence between any two distributions. Hence, we have,
\begin{align*}
T^*(\theta)^{-1} =& \sup\limits_{\underline{w}\in \mathcal{P}_K}\inf\limits_{\xi\in Alt(\theta)}\sum\limits_{a=1}^{K}w_a(\theta^Tx_a - \xi^Tx_a)^2\\
=&  \sup\limits_{\underline{w}\in \mathcal{P}_K}\min\limits_{a'\neq 1}\inf\limits_{\xi: \xi^Tx_a' \geq \xi^Tx_1+\epsilon}\sum\limits_{a=1}^{K}w_a(\theta^Tx_a - \xi^Tx_a)^2
\end{align*}
for some $\epsilon >0.$  We will first consider the inner part of the expression above (a convex program), which can be re-written as 
\beq
\inf\limits_{\xi: \xi^Tx_a' \geq \xi^Tx_1+\epsilon}(\theta-\xi)^TW(\theta-\xi)
\eeq
where $W$ is defined as in the theorem. Writing the Lagrangian, we get
\beqn
\mathcal{L}(\xi, \lambda) = (\theta-\xi)^TW(\theta-\xi) -\lambda(\xi^Tx_a' - \xi^Tx_1-\epsilon).
\eeqn
Setting $\Delta_{\xi}\mathcal{L}(\xi, \lambda)=0$, we get
\begin{align*}
\Delta_{\xi}\mathcal{L}(\xi, \lambda):& -2W\theta + 2W\xi -\lambda(x_{a'}-x_1)= 0\\
\Rightarrow \xi &= \theta +\frac{\lambda}{2}W^{-1}(x_{a'}-x_1).
\end{align*}
Substituting this value of $\xi$ into the Lagrangian, we obtain:
\begin{flalign*}
\begin{split}
 \mathcal{L}(\lambda)&= \theta^TW\theta -2\theta^TW(\theta +\frac{\lambda}{2}W^{-1}(x_{a'}-x_1)) \\& +(\theta +\frac{\lambda}{2}W^{-1}(x_{a'}-x_1))^TW(\theta +\frac{\lambda}{2}W^{-1}(x_{a'}-x_1)) \\& -\lambda((\theta +\frac{\lambda}{2}W^{-1}(x_{a'}-x_1))^T(x_{a'}-x_1)-\epsilon)\\
\end{split}\\
&=-\frac{\lambda^2}{4} (x_{a'}-x_1)^TW^{-1}(x_{a'}-x_1) -\lambda(\theta^T(x_{a'}-x_1)) + \lambda \epsilon.
\end{flalign*}
Setting $\Delta_{\lambda}\mathcal{L}(\lambda) =0,$
\beq
\frac{\lambda}{2} = \frac{\epsilon +\Delta_{a'}}{\norm{(x_{a'}-x_1)}_{W^{-1}}^2}.
\eeq
Hence, we get :
\beq
\xi^* = \theta + \frac{\epsilon +\Delta_{a'}}{\norm{(x_{a'}-x_1)}_{W^{-1}}^2}W^{-1}(x_{a'}-x_1).
\eeq
Using this value of $\xi$ in the objective we get,
\begin{align*}
T^*(\theta)^{-1} &= \sup\limits_{\underline{w}\in \mathcal{P}_K}\min\limits_{a'\neq 1} \frac{(\epsilon +\Delta_{a'})^2}{\norm{(x_{a'}-x_1)}_{W^{-1}}^2}.
\end{align*}
Putting this into Lemma \ref{Kaufmann},  and taking $\epsilon \to 0$, completes the proof.
\end{proof}

\subsection{Proof of lemma \ref{lemma:convexity of y'V-1y}}\label{proof of convexity}
\begin{proof}
 We will show that $\forall t\in [0,1]$, and for any $\lambda_1,\lambda_2\in \mathcal{P}_K$,
 \[y^T(\sum\limits_{i=1}^{K}(t\lambda_1(i)+(1-t)\lambda_2(i))x_ix_i ^T)^{-1}y \leq  ty^T(\sum\limits_{i=1}^{K}(\lambda_1(i))x_ix_i^T )^{-1}y +(1-t)y^T(\sum\limits_{i=1}^{K}(\lambda_2(i))x_ix_i )^{-1}y.\] 
Let us define $Z(t):=\sum\limits_{i=1}^{K}(t\lambda_1(i)+(1-t)\lambda_2(i))x_ix_i ^T=t\sum\limits_{i=1}^{K}\lambda_1(i)x_ix_i^T+(1-t)\sum\limits_{i=1}^{K}\lambda_2(i)x_ix_i ^T.$
 Let $(Z(t))'$ denote the derivative of (Z(t)) with respect to $t$. Clearly, for $t\in[0,1]$, $Z(t)$ is positive definite. Also,
 \begin{equation}\label{eq:conv}
 Z(t)Z(t)^{-1}=I= Z(t)'Z(t)^{-1}+ Z(t)(Z(t)^{-1})'=0\Rightarrow (Z(t)^{-1})'=-Z(t)^{-1}Z(t)'Z(t)^{-1}. 
 \end{equation}
 Differentiating one more time and noticing that $Z(t)''=0$, we get:
 \[(Z(t)^{-1})= 2Z(t)^{-1}Z(t)'Z(t)^{-1}Z(t)'Z(t)^{-1}. \]
 For any $u\in \mathbb{R}^d$, we have $v(t)=Z(t)'Z(t)^{-1}u$. Let $\psi(t)= u^TZ(t)^{-1}u.$  From \ref{eq:conv} we get $\psi(t)''= u^T{Z(t)^{-1}}''u=2v(t)^TZ(t)^{-1}v(t)\geq 0$ since $Z(t)^{-1}$ is PSD. From this we conclude that $\psi(t)$ is convex over $t\in [0,1].$
 As a result for any $t'\in [0,1]$ we have \[(1-t)\psi(0)+t\psi(1)-\psi(t)\geq 0\]
 \[\Leftrightarrow u^T[(1-t')(\sum\limits_{i=1}^{K}(\lambda_1(i))x_ix_i^T )^{-1} +t'(\sum\limits_{i=1}^{K}(\lambda_2(i))x_ix_i )^{-1}-(\sum\limits_{i=1}^{K}((1-t')\lambda_1(i)+t'\lambda_2(i))x_ix_i ^T)^{-1}]u\geq 0.\]
 
 Since $u$ is arbitrary it means the matrix inside the bracket is positive definite, which implies convexity of the desired function.

\end{proof}

\subsection{Proof of theorem \ref{2armupperbound}}\label{proof of 2 arms}
\begin{proof}{Proof of lemma \ref{lemma:2arm1stlemma}}
Let $n_1(t)=n_2(t)=n.$ Let $A=\lambda I +nx_1x_1^T+nx_2x_2^T.$  By Sherman-Morrison-Woodbury identity for matrix inversion and matrix associativity we have, 
\begin{align}
A^{-1} =&\Big(\frac{I}{\lambda} -\frac{nx_1x_1^T}{\lambda(\lambda+n)}\Big) -\frac{\Big(\frac{I}{\lambda} -\frac{nx_1x_1^T}{\lambda(\lambda+n)}\Big)nx_2x_2^T\Big(\frac{I}{\lambda} -\frac{nx_1x_1^T}{\lambda(\lambda+n)}\Big)}{1+nx_2^T\Big(\frac{I}{\lambda} -\frac{nx_1x_1^T}{\lambda(\lambda+n)}\Big)x_2}\\
=&\Big(\frac{I}{\lambda} -\frac{nx_2x_2^T}{\lambda(\lambda+n)}\Big) -\frac{\Big(\frac{I}{\lambda} -\frac{nx_2x_2^T}{\lambda(\lambda+n)}\Big)nx_1x_1^T\Big(\frac{I}{\lambda} -\frac{nx_2x_2^T}{\lambda(\lambda+n)}\Big)}{1+nx_1^T\Big(\frac{I}{\lambda} -\frac{nx_2x_2^T}{\lambda(\lambda+n)}\Big)x_1}
\end{align}
At time $t+1$ algorithm chooses: $a_{t+1}:= \argmax\limits_{a=1,2}\Big\{\frac{(x_a^TA^{-1}(x_1-x_2))^2}{1+x_a^TA^{-1}x_a} \Big\}.$ Let's calculate the terms explicitly.
By using the first formulation of $A^{-1}$ and the fact that $\norm{x}=1$, we get:
\begin{align*}
x_1^TA^{-1}x_1=&\Big(\frac{1}{\lambda} -\frac{n}{\lambda(\lambda+n)}\Big) -\frac{\Big(\frac{x_1^T}{\lambda} -\frac{nx_1^T}{\lambda(\lambda+n)}\Big)nx_2x_2^T\Big(\frac{x_1}{\lambda} -\frac{nx_1}{\lambda(\lambda+n)}\Big)}{1+\frac{n}{\lambda} -\frac{n^2(x_1^Tx_2)^2}{\lambda(\lambda+n)}}\\
=&\Big(\frac{1}{\lambda} -\frac{n}{\lambda(\lambda+n)}\Big) -\frac{n(1-\rho)^2\Big(\frac{1}{\lambda} -\frac{n}{\lambda(\lambda + n)} \Big)}{1+\frac{n}{\lambda} -\frac{n^2(1-\rho)^2}{\lambda(\lambda+n)}}
\end{align*}
Next we show by using the second formulation of $A^{-1}$, that
\begin{align*}
x_2^TA^{-1}x_2=&\Big(\frac{1}{\lambda} -\frac{n}{\lambda(\lambda+n)}\Big) -\frac{\Big(\frac{x_2^T}{\lambda} -\frac{nx_2^T}{\lambda(\lambda+n)}\Big)nx_1x_1^T\Big(\frac{x_2}{\lambda} -\frac{nx_2}{\lambda(\lambda+n)}\Big)}{1+\frac{n}{\lambda} -\frac{n^2(x_1^Tx_2)^2}{\lambda(\lambda+n)}}\\
=&\Big(\frac{1}{\lambda} -\frac{n}{\lambda(\lambda+n)}\Big) -\frac{n(1-\rho)^2\Big(\frac{1}{\lambda} -\frac{n}{\lambda(\lambda + n)} \Big)}{1+\frac{n}{\lambda} -\frac{n^2(1-\rho)^2}{\lambda(\lambda+n)}}
\end{align*}
We just showed that if $A=\lambda I +nx_1x_1^T+nx_2x_2^T,$ then we can $x_1^TA^{-1}x_1=x_2^TA^{-1}x_2.$
Next we show that there is indeed a tie. However, that is clear from the above part as:
\begin{align*}
\frac{(x_1^TA^{-1}(x_1-x_2))^2}{1+x_1A^{-1}x_1} =&
\frac{(x_1^TA^{-1}x_1-x_1^TA^{-1}x_2))^2}{1+x_1A^{-1}x_1}\\ 
=& \frac{(x_2^TA^{-1}x_2-x_2^TA^{-1}x_1))^2}{1+x_2A^{-1}x_2}\\
=& \frac{(x_2^TA^{-1}(x_1-x_2))^2}{1+x_2A^{-1}x_2}.
\end{align*}
\end{proof}
\begin{proof}{Proof of lemma \ref{lemma:2arm2ndlemma}}
We start by solving the LHS. 

\[\frac{(x_1^TA^{-1}(x_1-x_2))^2}{1+x_1A^{-1}x_1} = \frac{(x_1^TA^{-1}x_1-x_1^TA^{-1}x_2)^2}{1+x_1A^{-1}x_1}\\
= \frac{(x_1^TA^{-1}x_1)^2}{1+x_1A^{-1}x_1}\Big( 1- \frac{x_1^TA^{-1}x_2}{x_1^TA^{-1}x_1}\Big).\]

\textbf{Claim 1.} $x_1^TA^{-1}x_2 \leq x_k^TA^{-1}x_k$, where $k=1,2.$\\
\textit{Proof of claim.} 
Denote $A = B + x_1x_1^T$, where $B = \lambda I +nx_1x_1^T+nx_2x_2^T.$
\[
x_1^TA^{-1}x_2 = x_1^T\Big(B^{-1}-\frac{B^{-1}x_1x_1^TB^{-1}}{1+x_1^TB^{-1}x_1} \Big)x_2\\
= \frac{x_1^TB^{-1}x_2}{1+x_1^TB^{-1}x_1}
=^{*} \frac{x_1^TB^{-1}x_2}{1+x_2^TB^{-1}x_2}.\]
The last equality follows from lemma \ref{lemma:2arm1stlemma}(i).
Hence we will be done if we show $x_1^TB^{-1}x_2 \leq x_1^TB^{-1}x_1.$ But this follows from Cauchy-Schwarz as follows:

\[x_1^TB^{-1}x_2 = \langle B^{-1/2}x_1, B^{-1/2}x_2\rangle\\
\leq  \sqrt{x_1^TB^{-1}x_1}\sqrt{x_2^TB^{-1}x_2}\\
=^{*} x_k^TB^{-1}x_k, (k=1,2).\]
Again The last equality follows from lemma \ref{lemma:2arm1stlemma}(i).This completes the proof of claim 1.
Also, we can easily check that $x_1^TA^{-1}x_1 \leq x_2^TA^{-1}x_2.$
Next we observe that $\frac{y^2}{1+y}$ is a monotone increasing function for $y\geq 0. $ Hence combining this fact and  claim 1, we can have the following upper bound on $\frac{(x_1^TA^{-1}(x_1-x_2))^2}{1+x_1A^{-1}x_1}$:
\begin{align*}
\frac{(x_1^TA^{-1}(x_1-x_2))^2}{1+x_1A^{-1}x_1} \leq \frac{(x_2^TA^{-1}x_2)^2}{1+x_2A^{-1}x_2}\Big( 1- \frac{x_1^TA^{-1}x_2}{x_1^TA^{-1}x_1}\Big)^2
=\frac{\Big(x_2^TA^{-1}x_2 - \frac{x_2^TA^{-1}x_2}{x_1^TA^{-1}x_1}x_1^TA^{-1}x_2 \Big)^2}{1+x_2A^{-1}x_2}
\end{align*}
The numerator is of the form $(a-\alpha b)^2$ where $\alpha\geq 1.$ We will show here that $a\geq \alpha b$, i.e.,  $x_2^TA^{-1}x_2 \geq \frac{x_2^TA^{-1}x_2}{x_1^TA^{-1}x_1}x_1^TA^{-1}x_2$. But this is  equivalent to showing $ x_1^TA^{-1}x_1\geq x_1^TA^{-1}x_2$ as $x_2^TA^{-1}x_2>0$. But this again follows from claim  1. Hence we have $0\leq (a- \alpha b)\leq (a-b)$ as $\alpha\geq1.$ Hence, we can write:
\[\frac{(x_1^TA^{-1}(x_1-x_2))^2}{1+x_1A^{-1}x_1}\leq  \frac{(x_2^TA^{-1}x_2-x_1^TA^{-1}x_2)^2}{1+x_2A^{-1}x_2} = \frac{(x_2^TA^{-1}(x_1-x_2))^2}{1+x_2A^{-1}x_2}.\]
This completes the proof.
\end{proof}

\subsection{Proof of Theorem \ref{optimality of glucb for 2 arm}}\label{proof of 2 arms theorem}
Let us consider two algorithms: $\mathcal{A}_1$ and $\mathcal{A}_2$, where $\mathcal{A}_1$ is G-LUCB and $\mathcal{A}_2$ is any other algorithm. Then,
\begin{lemma}
$\forall t>0$, $\Phi^{\mathcal{A}_1}(t) \leq \Phi^{\mathcal{A}_2}(t)$, $\forall l\geq 0.$
\end{lemma}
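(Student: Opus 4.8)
The plan is to show that the potential $\Phi^{\mathcal{A}}(t)$ depends on the algorithm $\mathcal{A}$ only through the pair of play counts $(n_1(t),n_2(t))$, and then to argue that the balanced allocation produced by GLUCB is exactly the allocation minimizing this potential over all admissible counts. After $t$ rounds any algorithm has $V_t=\lambda I+n_1 x_1x_1^T+n_2 x_2x_2^T$ with $n_1+n_2=t$, so, writing $y:=x_1-x_2$, I would define
\[
g(n_1,n_2):=y^T\bigl(\lambda I+n_1 x_1x_1^T+n_2 x_2x_2^T\bigr)^{-1}y,
\]
and note that $\Phi^{\mathcal{A}}(t)=g(n_1(t),n_2(t))$ \emph{irrespective} of the order of plays or the observed rewards. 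Hence the whole comparison reduces to comparing the value of $g$ at two integer points of the segment $\{n_1+n_2=t,\ n_i\ge 0\}$, and the bound will in fact hold pathwise for every realization.

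First I would establish that $g$ is convex along this segment. Since $M(n_1,n_2):=\lambda I+n_1 x_1x_1^T+n_2 x_2x_2^T$ is affine in $(n_1,n_2)$ and takes values in the positive-definite cone, the map $(n_1,n_2)\mapsto y^TM(n_1,n_2)^{-1}y$ is convex. This is precisely the second-derivative computation carried out in the proof of Lemma \ref{lemma:convexity of y'V-1y} (which shows $\psi''=2v^TZ^{-1}v\ge 0$): that computation uses only that the matrix path is affine and stays positive definite, so it applies verbatim to $M(\,\cdot\,)$, which now merely carries the additional fixed term $\lambda I$ in its constant part.

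Second I would establish the symmetry $g(n_1,n_2)=g(n_2,n_1)$. Because $\norm{x_1}_2=\norm{x_2}_2=1$, the reflection $Q$ across the perpendicular bisector of $x_1$ and $x_2$ is orthogonal and satisfies $Qx_1=x_2$, $Qx_2=x_1$, hence $Qy=-y$ and $QM(n_1,n_2)Q^T=M(n_2,n_1)$. Substituting gives $g(n_2,n_1)=(Q^Ty)^TM(n_1,n_2)^{-1}(Q^Ty)=y^TM(n_1,n_2)^{-1}y=g(n_1,n_2)$; alternatively this symmetry can be read directly off the explicit inverse computed in Lemma \ref{lemma:2arm1stlemma}.

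Combining the two facts, $k\mapsto g(k,t-k)$ is a convex function on $[0,t]$ that is symmetric about $t/2$, so among integers it is minimized at the balanced split $k\in\{\lfloor t/2\rfloor,\lceil t/2\rceil\}$ (these two integers tie by symmetry, and convexity forces the value to increase as $k$ moves away from $t/2$). By Theorem \ref{2armupperbound} GLUCB realizes exactly this balanced split at every $t$, whereas $\mathcal{A}_2$ realizes some admissible integer split; therefore $\Phi^{\mathcal{A}_1}(t)=g(\text{balanced})\le g\bigl(n_1^{\mathcal{A}_2},n_2^{\mathcal{A}_2}\bigr)=\Phi^{\mathcal{A}_2}(t)$, as claimed. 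The only delicate point is the passage from continuous convexity to the discrete integer minimizer, which the exact symmetry about $t/2$ handles cleanly; everything else follows from the cited convexity lemma and the balanced-allocation guarantee of Theorem \ref{2armupperbound}.
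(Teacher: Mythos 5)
Your proof is correct, but it takes a genuinely different route from the paper's. The paper argues by direct computation: it aligns both algorithms on a common balanced prefix $A=\lambda I+m x_1x_1^T+m x_2x_2^T$, writes $V_t^{\mathcal{A}_1}$ and $V_t^{\mathcal{A}_2}$ as one or two rank-one updates of $A$, expands both potentials with the Sherman--Morrison formula, and compares the resulting expressions term by term, leaning on Lemma~\ref{lemma:2arm1stlemma}(i) to equate the cross terms. You instead observe that $\Phi$ depends only on the play counts, reduce the claim to minimizing the one-variable function $k\mapsto g(k,t-k)$ over integers, and win by combining (a) convexity along the affine matrix path --- which is exactly the second-derivative computation in the proof of Lemma~\ref{lemma:convexity of y'V-1y}, valid here because the $\lambda I$ term keeps the path inside the positive-definite cone --- with (b) the symmetry $g(n_1,n_2)=g(n_2,n_1)$ obtained from the orthogonal reflection $Q=I-2(x_1-x_2)(x_1-x_2)^T/\norm{x_1-x_2}_2^2$, which swaps $x_1$ and $x_2$ and negates $y$ (this is where the standing assumption $\norm{x_1}_2=\norm{x_2}_2=1$ enters, playing the same role that Lemma~\ref{lemma:2arm1stlemma}(i) plays in the paper's computation). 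Your route is shorter and more conceptual: it avoids the nested Sherman--Morrison algebra, makes transparent that the only input needed from the algorithm is the balanced-count guarantee of Theorem~\ref{2armupperbound}, actually puts Lemma~\ref{lemma:convexity of y'V-1y} to work (the paper states it but its written proof of this theorem does not invoke it), and yields the inequality pathwise for every admissible integer split rather than only for the specific ``all excess on one arm'' configuration the paper writes down. What the paper's computation buys in exchange is self-containedness at the level of explicit matrix identities and an exact expression for the gap $\Phi^{\mathcal{A}_2}(t)-\Phi^{\mathcal{A}_1}(t)$. Your handling of the discrete minimizer is also sound: a convex function on $[0,t]$ that is symmetric about $t/2$ is non-increasing on $[0,t/2]$ and non-decreasing on $[t/2,t]$, so among integers it is minimized at $\lfloor t/2\rfloor$ and $\lceil t/2\rceil$, which tie by symmetry.
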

\begin{proof}
Let the current round be ${t}$. Let $N_i^{\mathcal{A}_j}:=$ number of times arm $i$ has been been played by algorithm $\mathcal{A}_j$. Let ${t'}:= \min\{N_1^{\mathcal{A}_2}, N_2^{\mathcal{A}_2} \}$. Let us also, rearrange the plays of algorithm $\mathcal{A}_2$  such that for the first $t$ rounds, $\mathcal{A}_2$ plays arms 1 and 2 equal number of times. Let $A= I+ \frac{t'}{2}x_1x_1^T + \frac{t'}{2}x_2x_2^T$. Clearly, we have ${\norm{x_1-x_2}^2_{A^{-1}}}^{\mathcal{A}_1} = {\norm{x_1-x_2}^2_{A^{-1}}}^{\mathcal{A}_2}.$ Let $l=t-t'.$ Without loss of generality, let arm 2 be played more number of times in algorithm $\mathcal{A}_2$. To summarize, we have the following:
\begin{align*}
N_1^{\mathcal{A}_1}&=\frac{t}{2}=\frac{t'}{2}+\frac{l}{2},\\
N_2^{\mathcal{A}_1}&=\frac{t}{2}=\frac{t'}{2}+\frac{l}{2},\\
N_1^{\mathcal{A}_2}&=\frac{t'}{2},\\
N_2^{\mathcal{A}_2}&=\frac{t'}{2}+l,\\
V_t^{\mathcal{A}_1} &= A+\frac{l}{2}x_1x_1^T+ \frac{l}{2}x_2x_2^T,\\
V_t^{\mathcal{A}_1} &= A+lx_2x_2^T.
\end{align*}

Let us now calculate the potential functions for both the algorithms. 
\begin{align*}
\Phi^{\mathcal{A}_2}:=& \norm{x_1-x_2}^2_{V_t^{{\mathcal{A}_2}^{-1}}}=\norm{x_1-x_2}^2_{(A+lx_2x_2)^{-1}} \\
=&(x_1-x_2)^T\Bigg(A^{-1} - \frac{lA^{-1}x_2x_2^TA^{-1}}{1+lx_2^TA^{-1}x_2} \Bigg)(x_1-x_2)\\
=& \norm{x_1-x_2}^2_{A^{-1}} - \frac{l(x_2^TA^{-1}(x_1-x_2))^2}{1+lx_2^TA^{-1}x_2}.
\end{align*}
Next for algorithm 1,
\begin{align*}
\allowdisplaybreaks
\Phi^{\mathcal{A}_1}:=& \norm{x_1-x_2}^2_{V_t^{{\mathcal{A}_1}^{-1}}}=\norm{x_1-x_2}^2_{( A+\frac{l}{2}x_1x_1^T+ \frac{l}{2}x_2x_2^T)^{-1}} \\
=& \norm{x_1-x_2}^2_{(A+\frac{l}{2}x_1x_1^T)^{-1}} - \frac{l}{2}\frac{(x_2^T(A+\frac{l}{2}x_1x_1^T)^{-1}(x_1-x_2))^2}{1+\frac{l}{2}x_2^T(A+\frac{l}{2}x_1x_1^T)^{-1}x_2}\\
 =& \norm{x_1-x_2}^2_{A^{-1}} -\frac{l}{2}\frac{(x_1^TA^{-1}(x_1-x_2))^2}{1+\frac{l}{2}x_1^TA^{-1}x_1} - \frac{l}{2}\frac{\Bigg(x_2^T\Big(A^{-1}-\frac{\frac{l}{2}A^{-1}x_1x_1^TA^{-1}}{1+\frac{l}{2}x_1^TA^{-1}x_1} \Big)(x_1-x_2) \Bigg)^2}{1+\frac{l}{2}x_2^T\Big(A^{-1}-\frac{\frac{l}{2}A^{-1}x_1x_1^TA^{-1}}{1+\frac{l}{2}x_1^TA^{-1}x_1} \Big) x_2}\\
 =& \norm{x_1-x_2}^2_{A^{-1}} -\frac{l}{2}\frac{(x_1^TA^{-1}(x_1-x_2))^2}{1+\frac{l}{2}x_1^TA^{-1}x_1} - \frac{l}{2}\frac{\Bigg(x_2^TA^{-1}(x_1-x_2)-\frac{l}{2}\frac{x_2^TA^{-1}x_1x_1^TA^{-1}(x_1-x_2)}{1+\frac{l}{2}x_1^TA^{-1}x_1} \Bigg)^2}{1+\frac{l}{2}x_2^TA^{-1}x_2-\frac{l^2}{4}\frac{(x_1^TA^{-1}x_2)^2}{1+\frac{l}{2}x_1^TA^{-1}x_1}}\\
 =^{lemma \ref{lemma:2arm1stlemma} }& \norm{x_1-x_2}^2_{A^{-1}} -\frac{l}{2}\frac{(x_1^TA^{-1}(x_1-x_2))^2}{1+\frac{l}{2}x_1^TA^{-1}x_1} - \frac{l}{2}\frac{(x_1^TA^{-1}(x_1-x_2))^2\Bigg(1+\frac{l}{2}\frac{x_1^TA^{-1}x_2}{1+\frac{l}{2}x_1^TA^{-1}x_1} \Bigg)^2}{1+\frac{l}{2}x_1^TA^{-1}x_1-\frac{l^2}{4}\frac{(x_1^TA^{-1}x_2)^2}{1+\frac{l}{2}x_1^TA^{-1}x_1}}\\
 \leq^{(*)} & \norm{x_1-x_2}^2_{A^{-1}} -\frac{l}{2}\frac{(x_1^TA^{-1}(x_1-x_2))^2}{1+\frac{l}{2}x_1^TA^{-1}x_1} 
 -\frac{l}{2}\frac{(x_1^TA^{-1}(x_1-x_2))^2}{1+\frac{l}{2}x_1^TA^{-1}x_1}\\
 \leq & \norm{x_1-x_2}^2_{A^{-1}} -l\frac{(x_1^TA^{-1}(x_1-x_2))^2}{1+lx_1^TA^{-1}x_1} = \Phi^{\mathcal{A}_2}.
\end{align*}
\end{proof}
\subsection{Three arm analysis}
\subsubsection{Proof of lemma \ref{lemma:key3armcase}}\label{proof of key 3 arm lemma}
\begin{proof}
We will show the result by induction on $t$. We begin with the base case when $t=1$. At $t=0$, $V_0=\lambda I$ and $\{h_*^t,l_*^t \}$ can be any one of the $3\choose 2$ possible combinations. Recall that, at any time $t$ the algorithm plays 
\[a_{t}=\argmax\limits_{a\in [K]}\frac{\abs{x_a^TV_{t-1}^{-1}(x_{h_*^t}-x_{l_*^t})}}{\sqrt{1+\norm{x_a}_{V_{t-1}^{-1}}}}. \]
\textbf{Base case: $t=1$}\\
\par \textit{Sub-case 1: $\{h_*^t,l_*^t\}\in \{1,2\}$.} We can easily calculate the arguments in the argmax in this case:
Play\[\argmax\Bigg\{ \frac{1}{\sqrt{\lambda(\lambda+1)}}, \frac{1}{\sqrt{\lambda(\lambda+1)}}, \frac{\abs{\cos(\omega)-\sin(\omega)}}{\sqrt{\lambda(\lambda+1)}} \Bigg\} .\] 
By our assumption on the range of $\omega$, argmax is satisfied for arm 1 and 2. Hence for this sub-case at $t=1$, arm 3 is \textit{not} played.
\par \textit{Sub-case 2: $\{h_*^t,l_*^t\}\in \{2,3\}$.} We can again easily calculate the arguments in the argmax in this case:
Play\[\argmax\Bigg\{ \frac{\abs{\cos(\omega)}}{\sqrt{\lambda(\lambda+1)}}, \frac{1-\sin(\omega)}{\sqrt{\lambda(\lambda+1)}}, \frac{1-\sin(\omega)}{\sqrt{\lambda(\lambda+1)}} \Bigg\} .\] 
Again, by the assumption on the range of $\omega$, argmax is satisfied for arm 1. Hence again for this sub-case at $t=1$, arm 3 is \textit{not} played.

\par \textit{Sub-case 3: $\{h_*^t,l_*^t\}\in \{1,3\}$.} Finally, we  calculate the arguments in the argmax for this case:
Play\[\argmax\Bigg\{ \frac{\abs{1-\cos(\omega)}}{\sqrt{\lambda(\lambda+1)}}, \frac{\sin(\omega)}{\sqrt{\lambda(\lambda+1)}}, \frac{\abs{1-\cos(\omega)}}{\sqrt{\lambda(\lambda+1)}} \Bigg\} .\] 
Again, by the assumption on the range of $\omega$, argmax is satisfied for arm 2. Hence again for this sub-case at $t=1$, arm 3 is \textit{not} played.
\par We have shown that arm 3 will \textit{not} be played at round 1.

\textbf{General: $t>1$}\\
Let us now study the behavior of G-LUCB, for a general $t>1$, for this specific case. Let at round $t-1$, $V_{t-1}=\lambda I +n_1e_1e_1^T+ n_2e_2e_2^T$, where $n_1,n_2\geq 0$, integers.Again, we divide our analysis into three sub-cases depending on the realizations of the set $\{h_*^t,l_*^t \}$.  

\par \textit{Sub-case 1: $\{h_*^t,l_*^t\}\in \{1,2\}$.} After some easy calculations we end up at determining:
\[\argmax\Bigg\{ \frac{1}{\sqrt{(\lambda+n_1)(\lambda+n_1+1)}}, \frac{1}{\sqrt{(\lambda+n_2)(\lambda+n_2+1)}} , \frac{\abs{\frac{\cos(\omega)}{\lambda+n_1}-\frac{\sin(\omega)}{\lambda+n_2}}}{\sqrt{1+\frac{\cos^2(\omega)}{\lambda+n_1}+\frac{\sin^2(\omega)}{\lambda+n_2}}}         \Bigg \} \]
We have 2 cases depending on the sign of the last term. \\
\underline{If $ \frac{\cos(\omega)}{\lambda+n_1}\geq\frac{\sin(\omega)}{\lambda+n_2} $}: The term term becomes:
\begin{align*}
\frac{\abs{\frac{\cos(\omega)}{\lambda+n_1}-\frac{\sin(\omega)}{\lambda+n_2}}}{\sqrt{1+\frac{\cos^2(\omega)}{\lambda+n_1}+\frac{\sin^2(\omega)}{\lambda+n_2}}}=& \frac{{\frac{\cos(\omega)}{\lambda+n_1}-\frac{\sin(\omega)}{\lambda+n_2}}}{\sqrt{1+\frac{\cos^2(\omega)}{\lambda+n_1}+\frac{\sin^2(\omega)}{\lambda+n_2}}}\\
<& \frac{{\frac{\cos(\omega)}{\lambda+n_1}}}{\sqrt{1+\frac{\cos^2(\omega)}{\lambda+n_1}+\frac{\sin^2(\omega)}{\lambda+n_2}}}\\
<& \frac{{\frac{\cos(\omega)}{\lambda+n_1}}}{\sqrt{1+\frac{\cos^2(\omega)}{\lambda+n_1}}}\\=&\frac{\cos(\omega)}{\sqrt{(\lambda+n_1)(\lambda+n_1+\cos^2(\omega))}}\\
\leq&  \frac{1}{\sqrt{(\lambda+n_1)(\lambda+n_1+1)}}
\end{align*}
where the last inequality follows by noting that $\frac{x}{\sqrt{a+x^2}}$ is increasing in $x\geq0$ for $a>0.$ 
\par \underline{Else if $ \frac{\cos(\omega)}{\lambda+n_1}<\frac{\sin(\omega)}{\lambda+n_2} $}: The term term becomes:
\begin{align*}
\frac{\abs{\frac{\cos(\omega)}{\lambda+n_1}-\frac{\sin(\omega)}{\lambda+n_2}}}{\sqrt{1+\frac{\cos^2(\omega)}{\lambda+n_1}+\frac{\sin^2(\omega)}{\lambda+n_2}}}=& \frac{\frac{\sin(\omega)}{\lambda+n_2}-{\frac{\cos(\omega)}{\lambda+n_1}}}{\sqrt{1+\frac{\cos^2(\omega)}{\lambda+n_1}+\frac{\sin^2(\omega)}{\lambda+n_2}}}\\
<& \frac{\frac{\sin(\omega)}{\lambda+n_2}}{\sqrt{1+\frac{\cos^2(\omega)}{\lambda+n_1}+\frac{\sin^2(\omega)}{\lambda+n_2}}}\\
<& \frac{\frac{\sin(\omega)}{\lambda+n_2}}{\sqrt{1+\frac{\sin^2(\omega)}{\lambda+n_2}}}\\
=&\frac{\sin(\omega)}{\sqrt{(\lambda+n_2)(\lambda+n_2+\sin^2(\omega))}}\\
\leq&  \frac{1}{\sqrt{(\lambda+n_2)(\lambda+n_2+1)}}
\end{align*}
where, again, the last inequality follows by noting that $\frac{x}{\sqrt{a+x^2}}$ is increasing in $x\geq0$ for $a>0.$
\par Both of the above cases show that there exists some arm either 1 or 2, which always overpowers the third arm.

\par \textit{Sub-case 2: $\{h_*^t,l_*^t\}\in \{2,3\}$.} We will play:
\[\argmax\Bigg\{ \frac{\cos(\omega)}{\sqrt{(\lambda+n_1)(\lambda+n_1+1)}}, \frac{1-\sin(\omega)}{\sqrt{(\lambda+n_2)(\lambda+n_2+1)}} , \frac{\abs{\frac{\cos^2(\omega)}{\lambda+n_1}-\frac{\sin(\omega)(1-\sin(\omega))}{\lambda+n_2}}}{\sqrt{1+\frac{\cos^2(\omega)}{\lambda+n_1}+\frac{\sin^2(\omega)}{\lambda+n_2}}}         \Bigg \} \]
We have 2 cases depending on the sign of the last term. \\
\underline{If $ {\frac{\cos^2(\omega)}{\lambda+n_1}\geq \frac{\sin(\omega)(1-\sin(\omega))}{\lambda+n_2}} $}: The term  becomes:
\begin{align*}
\frac{\abs{\frac{\cos^2(\omega)}{\lambda+n_1}-\frac{\sin(\omega)(1-\sin(\omega))}{\lambda+n_2}}}{\sqrt{1+\frac{\cos^2(\omega)}{\lambda+n_1}+\frac{\sin^2(\omega)}{\lambda+n_2}}}=& \frac{\frac{\cos^2(\omega)}{\lambda+n_1}-\frac{\sin(\omega)(1-\sin(\omega))}{\lambda+n_2}}{\sqrt{1+\frac{\cos^2(\omega)}{\lambda+n_1}+\frac{\sin^2(\omega)}{\lambda+n_2}}}\\
\leq& \frac{{\frac{\cos^2(\omega)}{\lambda+n_1}}}{\sqrt{1+\frac{\cos^2(\omega)}{\lambda+n_1}+\frac{\sin^2(\omega)}{\lambda+n_2}}}\\
<& \frac{{\frac{\cos^2(\omega)}{\lambda+n_1}}}{\sqrt{1+\frac{\cos^2(\omega)}{\lambda+n_1}}}\\
=&\frac{\cos^2(\omega)}{\sqrt{(\lambda+n_1)(\lambda+n_1+\cos^2(\omega))}}\\
=&\cos(\omega)\Bigg(\frac{\cos(\omega)}{\sqrt{(\lambda+n_1)(\lambda+n_1+\cos^2(\omega))}}\Bigg)\\
\leq&  \cos(\omega)\frac{1}{\sqrt{(\lambda+n_1)(\lambda+n_1+1)}}
\end{align*}
where again we use the same inequality that $\frac{x}{\sqrt{a+x^2}}$ is increasing in $x\geq0$ for $a>0$, $\cos(\omega)\geq 0$ and $\cos^2(\omega)\leq 1.$ 
\par \underline{Else if $ {\frac{\cos^2(\omega)}{\lambda+n_1} < \frac{\sin(\omega)(1-\sin(\omega))}{\lambda+n_2}} $}: The third term  becomes:
\begin{align*}
\frac{\abs{{\frac{\cos^2(\omega)}{\lambda+n_1}- \frac{\sin(\omega)(1-\sin(\omega))}{\lambda+n_2}}}}{\sqrt{1+\frac{\cos^2(\omega)}{\lambda+n_1}+\frac{\sin^2(\omega)}{\lambda+n_2}}}=& \frac{-\frac{\cos^2(\omega)}{\lambda+n_1}+ \frac{\sin(\omega)(1-\sin(\omega))}{\lambda+n_2}}{\sqrt{1+\frac{\cos^2(\omega)}{\lambda+n_1}+\frac{\sin^2(\omega)}{\lambda+n_2}}}\\
<& \frac{\frac{\sin(\omega)(1-\sin(\omega))}{\lambda+n_2}}{\sqrt{1+\frac{\cos^2(\omega)}{\lambda+n_1}+\frac{\sin^2(\omega)}{\lambda+n_2}}}\\
<& \frac{\frac{\sin(\omega)(1-\sin(\omega))}{\lambda+n_2}}{\sqrt{1+\frac{\sin^2(\omega)}{\lambda+n_2}}}\\
=&\frac{\sin(\omega)(1-\sin(\omega))}{\sqrt{(\lambda+n_2)(\lambda+n_2+\sin^2(\omega))}}\\
=&(1-\sin(\omega))\Bigg(\frac{\sin(\omega)}{\sqrt{(\lambda+n_2)(\lambda+n_2+\sin^2(\omega))}}\Bigg)\\
\leq&  (1-\sin(\omega))\frac{1}{\sqrt{(\lambda+n_2)(\lambda+n_2+1)}}
\end{align*}
where, again, the last inequality follows by noting that $\frac{x}{\sqrt{a+x^2}}$ is increasing in $x\geq0$ for $a>0.$
\par We just showed that arm 3 is not played even in this sub-case.

\par \textit{Sub-case 3: $\{h_*^t,l_*^t\}\in \{1,3\}$.} In this case the algorithm plays:
\[\argmax\Bigg\{ \frac{1-\cos(\omega)}{\sqrt{(\lambda+n_1)(\lambda+n_1+1)}}, \frac{\sin(\omega)}{\sqrt{(\lambda+n_2)(\lambda+n_2+1)}} , \frac{\abs{\frac{\cos(\omega)(1-\cos(\omega))}{\lambda+n_1}-\frac{\sin^2(\omega)}{\lambda+n_2}}}{\sqrt{1+\frac{\cos^2(\omega)}{\lambda+n_1}+\frac{\sin^2(\omega)}{\lambda+n_2}}}         \Bigg \} \]
Again, we have 2 cases depending on the sign of the last term. \\
\underline{If $ {\frac{\cos(\omega)(1-\cos(\omega))}{\lambda+n_1}\geq \frac{\sin^2(\omega)}{\lambda+n_2}} $}: The third term  becomes:
\begin{align*}
\frac{\abs{\frac{\cos(\omega)(1-\cos(\omega))}{\lambda+n_1}- \frac{\sin^2(\omega)}{\lambda+n_2}}}{\sqrt{1+\frac{\cos^2(\omega)}{\lambda+n_1}+\frac{\sin^2(\omega)}{\lambda+n_2}}}=& \frac{\frac{\cos(\omega)(1-\cos(\omega))}{\lambda+n_1}- \frac{\sin^2(\omega)}{\lambda+n_2}}{\sqrt{1+\frac{\cos^2(\omega)}{\lambda+n_1}+\frac{\sin^2(\omega)}{\lambda+n_2}}}\\
\leq& \frac{{\frac{\cos(\omega)(1-\cos(\omega))}{\lambda+n_1}}}{\sqrt{1+\frac{\cos^2(\omega)}{\lambda+n_1}+\frac{\sin^2(\omega)}{\lambda+n_2}}}\\
<& \frac{{\frac{\cos(\omega)(1-\cos(\omega))}{\lambda+n_1}}}{\sqrt{1+\frac{\cos^2(\omega)}{\lambda+n_1}}}\\
=&\frac{\cos(\omega)(1-\cos(\omega))}{\sqrt{(\lambda+n_1)(\lambda+n_1+\cos^2(\omega))}}\\
=&(1-\cos(\omega))\Bigg(\frac{\cos(\omega)}{\sqrt{(\lambda+n_1)(\lambda+n_1+\cos^2(\omega))}}\Bigg)\\
\leq&  (1-\cos(\omega))\frac{1}{\sqrt{(\lambda+n_1)(\lambda+n_1+1)}}.
\end{align*}

\par \underline{Else if $ {\frac{\cos(\omega)(1-\cos(\omega))}{\lambda+n_1} < \frac{\sin^2(\omega)}{\lambda+n_2}} $}: The term  becomes:
\begin{align*}
\frac{\abs{\frac{\cos(\omega)(1-\cos(\omega))}{\lambda+n_1} - \frac{\sin^2(\omega)}{\lambda+n_2}}}{\sqrt{1+\frac{\cos^2(\omega)}{\lambda+n_1}+\frac{\sin^2(\omega)}{\lambda+n_2}}}=& \frac{\frac{\sin^2(\omega)}{\lambda+n_2}- \frac{\cos(\omega)(1-\cos(\omega))}{\lambda+n_1}}{\sqrt{1+\frac{\cos^2(\omega)}{\lambda+n_1}+\frac{\sin^2(\omega)}{\lambda+n_2}}}\\
<& \frac{\frac{\sin^2(\omega)}{\lambda+n_2}}{\sqrt{1+\frac{\cos^2(\omega)}{\lambda+n_1}+\frac{\sin^2(\omega)}{\lambda+n_2}}}\\
<& \frac{\frac{\sin^2(\omega)}{\lambda+n_2}}{\sqrt{1+\frac{\sin^2(\omega)}{\lambda+n_2}}}\\
=&\frac{\sin^2(\omega)}{\sqrt{(\lambda+n_2)(\lambda+n_2+\sin^2(\omega))}}\\
=&(\sin(\omega))\Bigg(\frac{\sin(\omega)}{\sqrt{(\lambda+n_2)(\lambda+n_2+\sin^2(\omega))}}\Bigg)\\
\leq&  (\sin(\omega))\frac{1}{\sqrt{(\lambda+n_2)(\lambda+n_2+1)}}
\end{align*}

\par We just proved that arm 3 will \textit{never} be played under the condition that $0<\omega<\pi/2.$

\end{proof}

\begin{proof}[Proof of lemma \ref{lemma:3arm1stlemma}]\label{proof of lemma 3 arm case}
At any time $t$, we note from the mechanics of the proof of Lemma \ref{lemma:key3armcase}, that GLUCB follows one of the following three rules, depending on the current realization of the set $\{h_*^t,l_*^t \}$.
\begin{enumerate}
\item[If $ \{h_*^t,l_*^t \}\equiv \{1,2 \} $]: Play arm 1 if 
\begin{align*}
\frac{1}{\sqrt{(\lambda+n_1)(\lambda+n_1+1)}} \geq \frac{1}{\sqrt{(\lambda+n_2)(\lambda+n_2+1)}}\\
\equiv n_2\geq n_1,
\end{align*}
else play Arm 2. Hence GLUCB follows the following rule
\begin{equation}\label{eq:12}
a_{t+1}=
\begin{cases}
1 & \text{if } n_2\geq n_1\\
2 & \text{otherwise}.
\end{cases}
\end{equation}
We see a \textit{self-correcting } property of GLUCB which tries to stabilize the pull counts of the arms according to some ratio.
\item[If $ \{h_*^t,l_*^t \}\equiv \{2,3 \} $] Play arm 1 if 
\begin{align*}
\frac{\cos(\omega)}{\sqrt{(\lambda+n_1)(\lambda+n_1+1)}} \geq \frac{1-\sin(\omega)}{\sqrt{(\lambda+n_2)(\lambda+n_2+1)}}\\
\Leftrightarrow \sqrt{\frac{(\lambda+n_2)(\lambda+n_2+1)}{(\lambda+n_1)(\lambda+n_1+1)}} \geq \frac{1-\sin(\omega)}{\cos(\omega)}
\approx n_2\geq n_1 \frac{1-\sin(\omega)}{\cos(\omega)}.
\end{align*}
else play Arm 2. Hence GLUCB (approximately) follows the following rule
\begin{equation}\label{eq:23}
a_{t+1}=
\begin{cases}
1 &  \text{if } n_2\geq n_1 \frac{1-\sin(\omega)}{\cos(\omega)}\\
2 & \text{otherwise}.
\end{cases}
\end{equation}
\item[If $ \{h_*^t,l_*^t \}\equiv \{1,3 \} $]Play arm 1 if 
\begin{align*}
\frac{1-\cos(\omega)}{\sqrt{(\lambda+n_1)(\lambda+n_1+1)}} \geq \frac{\sin(\omega)}{\sqrt{(\lambda+n_2)(\lambda+n_2+1)}}\\
\Leftrightarrow \sqrt{\frac{(\lambda+n_2)(\lambda+n_2+1)}{(\lambda+n_1)(\lambda+n_1+1)}} \geq \frac{\sin(\omega)}{1-\cos(\omega)}
\approx n_2\geq n_1 \frac{\sin(\omega)}{1-\cos(\omega)}.
\end{align*}
else play Arm 2. Hence again, GLUCB (approximately) follows the following rule
\begin{equation}\label{eq:13}
a_{t+1}=
\begin{cases}
1 &  \text{if } n_2\geq n_1 \frac{\sin(\omega)}{1-\cos(\omega)}\\
2 & \text{otherwise}.
\end{cases}
\end{equation}
\end{enumerate}

In all of the three cases above we see a \textit{self-correcting property} of GLUCB, which means that the number of plays of arms 1 and 2 can never be very different from each other. They are tied together according to a relation at each round precisely given by equations \ref{eq:12}, \ref{eq:23} and \ref{eq:13}. Hence we can write that there exists finite constants $a>0$ and $b>0$ such that at any time $t$,  
\begin{equation}\label{eq:attractive property for 3 arms}
an_2(t)\leq n_1(t) \leq bn_2(t).
\end{equation}
We are now ready to prove the lemma. We note that the claim will be implied by bounding the number of samples required by GLUCB to make $\text{Advantage}(2)<0$. By definition of Advantage, at any time $t$,
\[\text{Advantage}(2)=\theta_t^(x_2-x_{h_*^t})+\beta_t\norm{x_2-x_{h_*^t}}_{V_t^{-1}},\]
where, $h_*^t=1$ or $2$. We compute the number of samples required for each case and take the max.  
\par If $x_{h_*^t}\equiv 1$, we have under the good event $\mathcal{E}$:

\begin{align*}
&\theta_t^T(x_2-x_1)+\beta_t\norm{x_2-x_1}_{V_t^{-1}}<0\\
&\Leftarrow {\theta^*}^(e_2-e_1)+2\beta_t\norm{e_2-e_1}_{V_t^{-1}}<0\\
&\Leftrightarrow 2\beta_t\norm{e_2-e_1}_{V_t^{-1}}< \Delta_2\\
&\Leftarrow \frac{1}{\lambda+n_1} +\frac{1}{\lambda+n_2}< \frac{\Delta_2^2}{4\beta_t^2}\\
&\Leftarrow \frac{1}{\lambda+an_2} +\frac{1}{\lambda+n_2}< \frac{\Delta_2^2}{4\beta_t^2}\\
&\Leftarrow \frac{1}{an_2} +\frac{1}{n_2}<\frac{\Delta_2^2}{4\beta_t^2}\\
&\Leftrightarrow n_2 > \frac{4\beta_t^2}{\Delta_2^2}\frac{1+a}{a}\\
& \Leftrightarrow n_2 > {4\beta_t^2}\frac{1+a}{a},
\end{align*}
as $\Delta_2=1$. Next, if If $x_{h_*^t}\equiv 3$, we have under the good event $\mathcal{E}$:

\begin{align*}
&\theta_t^T(x_2-x_3)+\beta_t\norm{x_2-x_3}_{V_t^{-1}}<0\\
&\Leftarrow {\theta^*}^(e_2-x_3)+2\beta_t\norm{e_2-x_3}_{V_t^{-1}}<0\\
&\Leftrightarrow 2\beta_t\norm{e_2-x_3}_{V_t^{-1}}< \mu_3 \\
&\Leftarrow \frac{\cos(\omega)}{\lambda+n_1} +\frac{1-\sin(\omega)}{\lambda+n_2}< \frac{\mu_3^2}{4\beta_t^2}\\
&\Leftarrow \frac{\cos(\omega)}{\lambda+an_2} +\frac{1-\sin(\omega)}{\lambda+n_2}< \frac{\mu_3^2}{4\beta_t^2}\\
&\Leftarrow \frac{\cos(\omega)}{an_2} +\frac{1-\sin(\omega)}{n_2}<\frac{\mu_3^2}{4\beta_t^2}\\
&\Leftrightarrow n_2 > \frac{4\beta_t^2}{\mu_3^2}\left(\frac{\cos(\omega)}{a} + 1-\sin(\omega)\right).
\end{align*}
Hence, after $t'=\max\left\{{4\beta_t^2}\frac{1+a}{a},\frac{4\beta_t^2}{\mu_3^2}\left(\frac{\cos(\omega)}{a} + 1-\sin(\omega)\right) \right\}$ rounds, with high probability, GLUCB discards Arm 2 from the set $\{h_*^t,l_*^t \}$  for all $t\geq t'+1.$ We observe that foe small values of $\omega$, $a:=\min\left\{1, \frac{\cos(\omega)}{1-sin(\omega)}, \frac{1-\cos(\omega)}{\sin(\omega)} \right\} \approx \min\left\{1, \frac{1}{1-\omega}, \frac{1-\cos(\omega)}{\sin(\omega)} \right\} = \min\left\{1, \frac{1}{1-\omega}, \frac{\Delta_{min}}{\sin(\omega)} \right\}=\frac{\Delta_{min}}{\sin(\omega)}$ . Putting this value of $a$, gives the required result.

\end{proof}
\subsubsection{Proof of lemma \ref{lemma:3arm2ndlemma}}
\begin{proof}
Recall that steady state was defined as the time after which the set $\{h_*^t, l_*^t \}$ freezes as $\{1,3\}$. Let us calculate the relation between $n_1$ and $n_2$ at any time $t$ in the steady state. For this we note that we will play arm 1 if:
\begin{align*}
\frac{\abs{e_1^TV_t^{-1}(e_1-x_3)}}{\sqrt{1+e_1^TV_t^{-1}e_1}} \geq& \frac{\abs{e_2^TV_t^{-1}(e_1-x_3)}}{\sqrt{1+e_2^TV_t^{-1}e_2}}\\
\Leftrightarrow \frac{\abs{\frac{1-\cos(\omega)}{\lambda+ n_1}}}{\sqrt{1+\frac{1}{\lambda+n_1}}} \geq& \frac{\abs{\frac{\sin(\omega)}{\lambda+ n_2}}}{\sqrt{1+\frac{1}{\lambda+n_2}}}\\
\Leftrightarrow \frac{\abs{1-\cos(\omega)}}{\sqrt{(\lambda+n_1)(1+\lambda+n_1)}} \geq&\frac{\abs{\sin(\omega)}}{\sqrt{(\lambda+n_2)(1+\lambda+n_2)}}\\ 
\end{align*} 
For large values of $n_1$ and $n_2$, we make the following approximation:
\begin{align*}
\approx \frac{n_2}{n_1}\geq& \frac{\abs{\sin(\omega)}}{\abs{1-\cos(\omega)}}
\\
=&\frac{{\sin(\omega)}}{{1-\cos(\omega)}}=\frac{\sin(\omega)}{\Delta_{min}}\\
\Leftrightarrow n_1 \leq& n_2\frac{\Delta_{min}}{\sin(\omega)}.
\end{align*}
Hence we imply the following: In steady-state, G-LUCB keeps on playing arm 2, till the above inequality is reached, when it switches and plays arm 1 and continues to do so until the inequality is reversed, and repeats. 
\par From the stopping criteria, the algorithm stops at some time $t$ if:
\[\tilde{\theta}_t^T(x_{l_*^t}-x_{h_*^t}) +\beta_t\norm{x_{l_*^t}-x_{h_*^t}}_{V_{t}^{-1}} <0. \]
From the definition of steady state and under the good event $\mathcal{E}$, $h_*^t\equiv 1$ and $l_*^t\equiv 3$. 
\begin{align*}
\tilde{\theta}_t^T(x_{3}-x_{1}) +\beta_t\norm{x_{3}-x_{1}}_{V_{t}^{-1}} &<0\\
\Leftarrow {\theta^*}^T(x_{3}-x_{1}) + +2\beta_t\norm{x_{3}-x_{1}}_{V_{t}^{-1}}&<0\\
\Leftrightarrow 2\beta_t\norm{x_{3}-x_{1}}_{V_{t}^{-1}}&< \Delta_{min}\\
\Leftrightarrow \norm{x_{3}-x_{1}}^2_{V_{t}^{-1}}&< \frac{\Delta_{min}^2}{4\beta_t^2}\\
\Leftrightarrow \begin{bmatrix*}
1-\cos(\omega) \quad \sin(\omega)
\end{bmatrix*}
\begin{bmatrix*}
\frac{1}{\lambda+N_1} \quad 0\\
0\quad \frac{1}{\lambda+N_2}
\end{bmatrix*}
\begin{bmatrix*}
1-\cos(\omega) \quad \sin(\omega)
\end{bmatrix*}&<\frac{\Delta_{min}^2}{4\beta_t^2}\\
\Leftrightarrow \frac{(1-\cos(\omega))^2}{\lambda +N_1} + \frac{(\sin(\omega))^2}{\lambda +N_2}&<\frac{\Delta_{min}^2}{4\beta_t^2}\\
\Leftrightarrow \frac{\Delta_{min}^2}{\lambda +N_1} + \frac{\sin^2(\omega)}{\lambda +N_2}&<\frac{\Delta_{min}^2}{4\beta_t^2}\\
\Leftarrow \frac{\Delta_{min}^2}{N_1} + \frac{\sin^2(\omega)}{N_2}&<\frac{\Delta_{min}^2}{4\beta_t^2}\\
\Leftrightarrow \frac{\Delta_{min}^2}{\frac{\Delta_{min}}{\sin(\omega)}N_2} + \frac{\sin^2(\omega)}{N_2}&<\frac{\Delta_{min}^2}{4\beta_t^2}\\
\Leftrightarrow \frac{\Delta_{min}\sin(\omega)+ \sin^2(\omega)}{N_2}&<\frac{\Delta_{min}^2}{4\beta_t^2}\\
\Leftrightarrow N_2 &> \frac{4\beta_t^2}{\Delta_{min}^2}\Bigg(\Delta_{min}\sin(\omega)+ \sin^2(\omega)  \Bigg)\\
 &= \frac{4\beta_t^2}{\Delta_{min}}\sin(\omega) +\frac{4\beta_t^2}{\Delta_{min}^2}\sin^2(\omega)\\
&=O\Bigg(\frac{4\beta_t^2}{\Delta_{min}^2}{\sin^2(\omega)}\Bigg).
\end{align*}

\end{proof}
\end{document}